\PassOptionsToPackage{dvipsnames,table}{xcolor}
\documentclass[dvipsnames,table]{article} %
\usepackage{iclr2025_conference,times}

\usepackage{hyperref}
\usepackage{url}

\usepackage[dvipsnames,table]{xcolor}%
\usepackage{microtype}
\usepackage{graphicx}
\usepackage{subcaption}
\usepackage{booktabs} %
\usepackage{multirow}
\usepackage[inline]{enumitem}
\usepackage{todonotes}
\usepackage{wrapfig}
\usepackage{pifont}

\usepackage{tikz}
\usetikzlibrary{graphs}

\definecolor{winning}{HTML}{16c359}
\definecolor{losing}{HTML}{d6001c}

\presetkeys%
   {todonotes}%
   {inline,backgroundcolor=yellow!80,textcolor=blue!80!white}{}

\usepackage{amsthm}

\theoremstyle{plain}
\newtheorem{theorem}{Theorem}[section]
\newtheorem{proposition}[theorem]{Proposition}
\usepackage{amsmath,amsfonts,bm}
\usepackage{stmaryrd}
\usepackage{trimclip}
\makeatletter
\DeclareRobustCommand{\shortto}{%
  \mathrel{\mathpalette\short@to\relax}%
}

\newcommand{\short@to}[2]{%
  \mkern2mu
  \clipbox{{.3\width} 0 0 0}{$\m@th#1\vphantom{+}{\shortrightarrow}$}%
  }
\makeatother

\def\eqref#1{equation~\ref{#1}}

\def\1{\bm{1}}

\DeclareMathAlphabet{\mathsfit}{\encodingdefault}{\sfdefault}{m}{sl}
\SetMathAlphabet{\mathsfit}{bold}{\encodingdefault}{\sfdefault}{bx}{n}

\clearpage{}%
\usepackage{amsthm}
\usepackage{xspace}
\usepackage{tikz}
\usetikzlibrary{arrows}
\usetikzlibrary{positioning}

\definecolor{ForestGreen}{RGB}{34,139,34}

\newcommand\mbrak[1]{\{\!\!\{ #1 \}\!\!\}}

\newcommand{\aG}{\ensuremath{\mathcal{G}}}
\newcommand{\homs}{\Hom}
\newcommand{\Hom}{\ensuremath{\mathsf{Hom}}\xspace}

\newcommand{\ourpe}{{MoSE}\xspace}

\newcommand{\spasm}{\ensuremath{\mathsf{Spasm}}\xspace}

\newcommand{\nop}[1]{}

\theoremstyle{remark}

\newcommand\xqed[1]{%
  \leavevmode\unskip\penalty9999 \hbox{}\nobreak\hfill
  \quad\hbox{#1}}
\newcommand\exend{\xqed{$\triangle$}}

\newcommand{\vartxtm}[1]{\scriptsize $\pm{}#1$}

\newcommand{\cmark}{\ding{51}}%
\newcommand{\xmark}{\ding{55}}%

\usepackage{tabularx, environ, caption}
\makeatletter
\newcolumntype{\expand}{}
\long\@namedef{NC@rewrite@\string\expand}{\expandafter\NC@find}

\NewEnviron{problem}[2][]{%
  \def\problem@arg{#1}%
  \def\problem@framed{framed}%
  \def\problem@lined{lined}%
  \def\problem@doublelined{doublelined}%
  \ifx\problem@arg\@empty%
    \def\problem@hline{}%
  \else%
    \ifx\problem@arg\problem@doublelined%
      \def\problem@hline{\hline\hline}%
    \else%
      \def\problem@hline{\hline}%
    \fi%
  \fi%
  \ifx\problem@arg\problem@framed%
    \def\problem@tablelayout{|>{\itshape}lX|c}%
    \def\problem@title{\multicolumn{2}{|l|}{%
        \raisebox{-\fboxsep}{\textsc{#2}}%
      }}%
  \else
    \def\problem@tablelayout{>{\itshape}lXc}%
    \def\problem@title{\multicolumn{2}{l}{%
        \raisebox{-\fboxsep}{\textsc{\large #2}}%
      }}%
  \fi%
  \smallskip\par\noindent%
  \renewcommand{\arraystretch}{1.0}%
  \begin{center}
  \begin{tabularx}{0.9\columnwidth}{\expand\problem@tablelayout}%
    \problem@hline%
    \problem@title\\[2\fboxsep]%
    \BODY\\\problem@hline%
  \end{tabularx}%
\end{center}

  \medskip\par%
}
\makeatother
\clearpage{}%
\clearpage{}%
\usepackage{tkz-graph}
\usepackage{adjustbox}

\newcommand{\graphpreamble}{%
  \SetVertexNoLabel
  \SetVertexMath
  \SetVertexSimple[MinSize = 1pt,
  LineWidth = 0pt,
  LineColor = black,%
  FillColor = black]
  \renewcommand*{\VertexInnerSep}{1pt}

}

\newcommand{\drawCsix}{%
  \begin{tikzpicture}[scale=0.18,rotate=0, baseline={(0,-0.1)}]%
    \GraphInit[vstyle=Classic]%
    \graphpreamble
    \SetGraphUnit{1}
    \Vertices{circle}{A,B,C,D,E,F}
    \Edges(A,B,C,D,E,F,A)%
  \end{tikzpicture}
}

\clearpage{}%

\usepackage{cleveref}

\title{Homomorphism Counts as Structural \\ Encodings for Graph Learning}

\author{Linus Bao\thanks{Equal contribution.} \\
University of Oxford\\
\And
Emily Jin$^*$ \\
University of Oxford \\
\And
Michael Bronstein \\
University of Oxford / AITHYRA \\
\AND
\And
$\dot{\text{I}}$smail $\dot{\text{I}}$lkan Ceylan \\
University of Oxford \\
\And
Matthias Lanzinger \\
TU Wien
}

\newtheorem{example}[theorem]{Example}

\iclrfinalcopy %
\begin{document}

\maketitle 

\begin{abstract}
Graph Transformers are popular neural networks that extend the well-known Transformer architecture to the graph domain. These architectures operate by applying self-attention on graph nodes and incorporating graph structure through the use of positional encodings (e.g., Laplacian positional encoding) or structural encodings (e.g., random-walk structural encoding). The quality of such encodings is critical, since they provide the necessary \emph{graph inductive biases} to condition the model on graph structure. In this work, we propose \emph{motif structural encoding} (\ourpe) as a flexible and powerful structural encoding framework based on counting graph homomorphisms. Theoretically, we compare the expressive power of \ourpe to random-walk structural encoding and relate both encodings to the expressive power of standard message passing neural networks. Empirically, we observe that \ourpe outperforms other well-known positional and structural encodings across a range of architectures, and it achieves state-of-the-art performance on a widely studied molecular property prediction dataset.
\end{abstract}

\section{Introduction}
\label{sec:intro}

Graph Neural Networks (GNNs) have been the prominent approach to graph machine learning in the last decade. Most conventional GNNs fall under the framework of Message Passing Neural Networks (MPNNs), which incorporate graph structure -- the so-called \emph{graph inductive bias} --  in the learning and inference process by iteratively computing node-level representations using ``messages" that are passed from neighboring nodes~\citep{GilmerSRVD17}. More recently, Transformer architectures~\citep{vaswani2017attention} have been applied to the graph domain and achieve impressive empirical performance \citep{gps,grit}, especially in molecular property prediction.

While MPNNs operate by exchanging messages between adjacent nodes in a graph, Transformers can be seen as a special type of GNN that operates on complete graphs. On the one hand, this allows for direct communication between all node pairs in a graph, regardless of whether there exists an edge between two nodes in the original input graph. On the other hand, since the node adjacency information is omitted, the Transformer lacks any ``built-in" graph inductive bias. Instead, the underlying graph structure is usually provided by combining Transformer layers with local message-passing layers \citep{yun2019graph, gps, bar2024subgraphormer} or by incorporating additional pre-computed features that encode the topological context of each node. These additional features are referred to as \textit{positional or structural encodings}. Common encodings include Laplacian positional encoding (LapPE) \citep{dwivedi2023benchmarking} and random-walk structural encoding (RWSE) \citep{rwse}. The quality of these encodings are a key ingredient in the success of Transformers on graphs~\citep{rwse, gps}.

\textbf{Motivation.} While empirical studies have observed the impact of structural or positional encodings on model performance~\citep{rwse, gps}, our theoretical understanding of the expressive power of different encodings remains limited. This represents an important gap in the literature, especially since the expressive power of Transformer-based architectures heavily rely on the specific choice of the structural or positional encoding \citep{DBLP:conf/iclr/RosenbluthTRKG24}. 
Let us consider RWSE, which has been empirically reported as the most successful encoding on molecular benchmarks \citep{gps}. We now illustrate a serious limitation of RWSE in its power to distinguish nodes.
\begin{wrapfigure}{r}{0.48\textwidth}
        \centering
    \begin{minipage}[t]{0.48\textwidth}
        \hfill
        \begin{tikzpicture}
            [vertex/.style={circle, draw, line width=.25mm, inner sep=0pt, minimum size=2mm},rotate=0,scale=0.6]

            \node[anchor=south] (G) at (3, 3) {$G$};            
            \node[vertex] (0) at (0.75, 3.5) {};
            \node[vertex] (1) at (1.5, 3) {};
            \node[vertex] (2) at (2.25, 2.5) {};
            \node[vertex, fill=black] (3) at (3, 2) {};
            \node[vertex] (4) at (3.75, 2.5) {};
            \node[vertex] (5) at (4.5, 3) {};
            \node[vertex] (6) at (5.25, 3.5) {};
            \node[vertex, fill=ForestGreen] (7) at (3, 1) {};
            \node[anchor=east] (u1) at (3, 1) {$u_1$};
            \node[anchor=east] (v1) at (3, 1.9) {$v_1$};            
            \draw (0) -- (1);
            \draw (1) -- (2);
            \draw (2) -- (3);
            \draw (3) -- (4);
            \draw (4) -- (5);
            \draw (5) -- (6);
            \draw (3) -- (7);
        \end{tikzpicture}
        \hfill
        \begin{tikzpicture}
            [vertex/.style={circle, draw, line width=.25mm, inner sep=0pt, minimum size=2mm},rotate=-180,scale=0.4]

            \node[anchor=south] (G) at (5, 0) {$H$};            
            \node[vertex] (0) at (0, 0) {};
            \node[vertex] (1) at (0, 2) {};
            \node[vertex] (2) at (2, 3) {};
            \node[vertex,, fill=black] (3) at (4, 2) {};
            \node[vertex] (4) at (4, 0) {};
            \node[vertex] (5) at (2, -1) {};
            \node[vertex, fill=ForestGreen] (7) at (6, 3) {};
            \node[anchor=east] (u2) at (6, 3) {$u_2$};
            \node[anchor=east] (v2) at (4.1, 1.8) {$v_2$};         

            \draw (0) -- (1);
            \draw (1) -- (2);
            \draw (2) -- (3);
            \draw (3) -- (4);
            \draw (4) -- (5);
            \draw (5) -- (0);
            \draw (3) -- (7);
        \end{tikzpicture}
        \caption*{(a) The nodes $u_1$ and $u_2$ (also, $v_1$ and $v_2$) can be distinguished by 1-WL, but not by RWSE.}
        \hfill
    \end{minipage}

    \begin{minipage}[t]{0.48\textwidth}
        \centering
        \includegraphics[width=.8\textwidth]{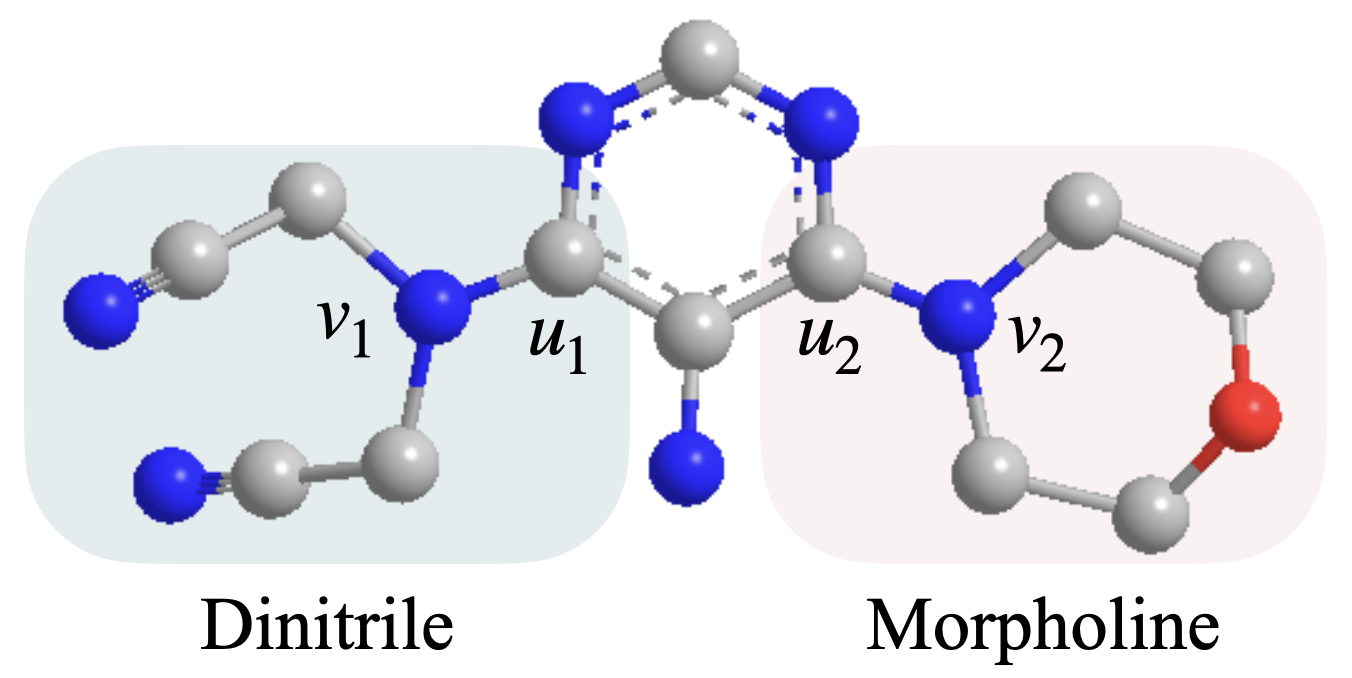} %
        \caption*{(b) A molecule from the ZINC dataset, with \textcolor{gray}{Carbon}, \textcolor{blue}{Nitrogen}, and \textcolor{red}{Oxygen} atoms. The nodes $u_1$ and $u_2$ are indistinguishable by RWSE and so are $v_1$ and $v_2$.}
    \end{minipage}%
    \caption{RWSE is weaker than 1-WL in distinguishing certain nodes (a), which holds for \emph{all} choices of random walk length. This limitation is observed in real-world molecular graphs (b). %
    }
    \label{fig:ex1}
    \vspace{-0.5cm}
\end{wrapfigure}

\textbf{How Expressive is RWSE?}
The expressive power of MPNNs is upper bounded by the \emph{1-dimensional Weisfeiler Leman graph isomorphism test (1-WL)}~\citep{xu18,MorrisAAAI19}. This means that the node invariants computed by MPNNs are at most as powerful as the node invariants computed by 1-WL. As a result, an MPNN can distinguish two nodes \emph{only if} 1-WL can distinguish them. Some MPNN architectures, such as Graph Isomorphism Networks (GIN)~\citep{xu18}, can match this expressiveness bound and distinguish any pair of nodes that can be distinguished by 1-WL. We relate the expressive power of RWSE to that of the Weisfeiler Leman hierarchy by proving that node invariants given by RWSE are strictly weaker than node invariants computed by 2-WL (Proposition~\ref{prop:2wl}).\footnote{Throughout the paper, we refer to the folklore version of the WL hierarchy \citep{GroheOtto15}.} Additionally, RWSE node invariants are incomparable to node invariants computed by {1-WL} (Proposition~\ref{prop:rwse-1wl-incomp}). In fact, there are simple node pairs which can be distinguished by 1-WL but not by RWSE (and vice versa).

\begin{example}
  Consider the nodes from the graphs $G$ and $H$ from \Cref{fig:ex1}(a). Observe that the nodes $u_1$ and $u_2$ can be distinguished by 1-WL. Interestingly, however, they are indistinguishable to RWSE for \emph{all} lengths $\ell$ of the considered random walk. This observation also applies to the nodes $v_1$ and $v_2$. Moreover, this limitation is not merely of theoretical interest, as it readily applies to real-world molecules, where the use of RWSE is prominent. \Cref{fig:ex1}(b) depicts a molecule from the ZINC dataset, where the nodes $u_1$ and $u_2$ (also, $v_1$ and $v_2$) cannot be distinguished by RWSE although they can be distinguished by MPNNs. In practical terms, this implies that RWSE cannot distinguish key nodes in a Dinitrile group from those in a Morpholine group. %
  $\exend$
  \end{example}

\textbf{Motif Structural Encoding as a Flexible and Powerful Approach.} In this paper, we propose \textit{motif structural encoding} (\ourpe) as a method of leveraging homomorphism count vectors to capture graph structure. Homomorphism counts have been investigated in the context of MPNNs to overcome well-known theoretical limitations in their expressivity \citep{BarceloGRR21, DBLP:conf/icml/JinBCL24} with promising theoretical and empirical findings (see \Cref{sec:relwork}). Building on the existing literature, we show that \ourpe is a general, flexible, and powerful alternative to existing positional and structural encoding schemes in the context of Transformers. In fact, we show that unlike RWSE, \ourpe cannot be strictly confined to one particular level of the WL hierarchy (Proposition~\ref{prop:motif}), and \ourpe can provide significant expressiveness gains exceeding that achieved by RWSE (\Cref{thm:main}). 
\begin{example}
\label{ex:sub.not.better}
Consider the graphs $G$ and $H$ from our running example, and observe that $H$ has a cycle of length six, while $G$ has no cycles. The node-level homomorphism counts $\homs_{\shortto v_1}(\drawCsix,G) \neq \homs_{\shortto v_2}(\drawCsix,H)$ (defined formally in \Cref{sec:expr}) provide sufficient information to distinguish the nodes $v_1$ and $v_2$. Analogous statements can also be made for $u_1$ and  $u_2$. $\exend$
\end{example}

\textbf{Contributions.} Our key contributions can be summarized as follows:

\begin{itemize}[noitemsep,topsep=0pt,parsep=1pt,partopsep=0pt,label={\large\textbullet},leftmargin=*]
    \item We introduce \ourpe and detail the expressiveness guarantees that can be achieved by using homomorphism counts as a graph inductive bias (\Cref{sec:ourpe-intro}).
    \item We compare \ourpe to RWSE in terms of expressivity and relate them to the expressive power of MPNNs through the well-known WL hierarchy (\Cref{sec:expr}).
    \item We empirically validate our theoretical findings and demonstrate the efficacy of \ourpe on a variety of real-world and synthetic benchmarks. We report consistent performance gains over existing encoding types and achieve state-of-the-art results on the ZINC molecular benchmark (\Cref{sec:exp}).
\end{itemize}

\section{Related Work}
\label{sec:relwork}

\textbf{(Graph) Transformers and Encodings.}
\citet{vaswani2017attention} first highlighted the power of self-attention when they introduced their Transformer architecture. \citet{dwivedi2020generalization} generalized the concepts presented in \citet{vaswani2017attention} to the graph domain. In recent years, several different Graph Transformer architectures have arisen. \citet{yun2019graph} first combined message-passing layers with Transformer layers, and other models have followed this approach \citep{gps, bar2024subgraphormer, shirzad2023exphormer}. Relatedly, Transformer architectures dedicated to knowledge graphs have also been developed \citep{zhang2023structure, liu2022mask}. \citet{shehzad2024graph} present a survey of existing Graph Transformer models and how they compare to each other.

A key component in the success of Graph Transformer models is the use of effective graph inductive biases \citep{rwse}. Laplacian positional encodings (LapPE), which were introduced by \citet{dwivedi2020generalization}, are a popular choice, but they break invariance\footnote{By sacrificing invariance, it becomes much easier to design very expressive architectures. In fact, Graph Transformer architectures with LapPE are universal \citep{Kreuzer-Neurips'21}, but so is a 2-layer MLP and a single-layer GNN using LapPE (see Proposition 3.1 of \cite{DBLP:conf/iclr/RosenbluthTRKG24}).} as they are dependent on an arbitrary choice of eigenvector sign and basis \citep{DBLP:conf/iclr/LimRZSSMJ23, DBLP:conf/nips/Lim0JM23}.
\citet{gps} present a framework for building general, powerful, and scalable Graph Transformer models, and they perform ablations with a suite of different encoding types, promoting the use of random-walk structural encoding (RWSE) on molecules. \citet{ying2021transformers} introduced Graphormer, which uses an attention mechanism based on shortest-path (and related) distances in a graph. \citet{grit} present GRIT as a novel Graph Transformer that relies solely on relative random walk probabilities (RRWPs) for its graph inductive bias.

\textbf{Expressive power of GNNs.} MPNNs capture the vast majority of GNNs~\citep{GilmerSRVD17} and their expressive power is upper bounded by 1-WL~\citep{xu18,MorrisAAAI19}. This limitation has motivated a large body of work, including \emph{higher-order} GNN architectures~\citep{MorrisAAAI19,MaronBSL19, MaronFSL19,KerivenP19}, GNNs with unique node features~\citep{Loukas20}, and GNNs with random features~\citep{SatoSDM2020, AbboudCGL21}, to achieve higher expressive power. The expressivity of GNNs has also been evaluated in terms of their ability to detect graph components, such as biconnected components \citep{Zha+2023}, and in terms of universal approximation on permutation invariant functions \citep{DBLP:journals/corr/abs-1905-12560}. Fully expressive architectures have been designed for special graph classes, e.g., for planar graphs \citep{Dimitrov2023}. 

Our work is most closely related to approaches that design more expressive architectures by injecting the counts of certain substructures, widely referred to as motifs. \cite{BouritsasFZB23} present an early work that enhances node feature encodings for GNNs by counting subgraph isomorphisms for relevant patterns. \cite{BarceloGRR21} follow up on this work by proposing to instead count the number of homomorphism from a set of pertinent substructures. Several other works have identified homomorphism counts as a key tool for understanding the expressive power of MPNNs \citep{neuen2023homomorphism,LanzingerPB24,wang2023towards}. \cite{zhang2024beyond} also recently proposed a new framework that characterizes the expressiveness of GNNs in terms of homomorphism counts. Through tight connections between counting homomorphisms and counting (induced) subgraphs \citep{CurticapeanDM17,BressanLR23,RothS20}, these expressiveness results can be lifted to a wide range of functions~\citep{LanzingerPB24}.
As such, \cite{DBLP:conf/icml/JinBCL24} built upon these theoretical observations to establish a general framework for using homomorphism counts in MPNNs. In particular, they show that homomorphism counts that capture certain ``bases" of graph mappings provide a theoretically well-founded approach to enhance the expressivity of GNNs. \cite{dhn} reiterate the benefits of using homomorphism in GNNs.

\section{Preliminaries}
\label{sec:prelims}
\textbf{Graphs and Homomorphism Counts.}  \;  An undirected \emph{graph} is a set of \emph{nodes} $V(G)$ and a set of \emph{edges} $E(G)\subseteq V(G)\times V(G)$ which satisfy symmetry: $(u,v)\in E(G)$ if and only if $(v,u)\in E(G)$. Unless otherwise stated, we take all graphs to be finite, and we take all graphs to be \emph{simple}: $(v,v)\notin E(G)$ for all $v\in V(G)$ and there exists at most one edge (up to edge symmetry) between any pair of nodes. We describe nodes $u,v\in V(G)$ as \emph{adjacent} if $(u,v)\in E(G)$. The set of all nodes adjacent to $v\in V(G)$ is the \emph{neighborhood} of $v$, notated $\mathcal{N}(v)$. The number of neighboring nodes $|\mathcal N(v)|$ is the \emph{degree} of $v$, notated $d(v)$. 

A \emph{homomorphism} from graph $G$ to graph $H$ is a function $f: V(G)\to V(H)$ such that $(u,v)\in E(G)$ implies $(f(u),f(v))\in E(H)$. We say a homomorphism is an \emph{isomorphism} if the function $f$ is bijective, and if it additionally satisfies $(u,v)\in E(G)$ if and only if $(f(u),f(v))\in E(H)$. In this case, we describe the graphs $G$ and $H$ as being \emph{isomorphic}, denoted $G\cong H$. We write $\text{Hom}(G,H)$ for the number of homomorphisms from $G$ to $H$. If we restrict to counting only homomorphisms which map a particular node $g\in V(G)$ to the node $h\in V(H)$, we denote this \emph{rooted homomorphism count} as $\text{Hom}_{g\shortto h}(G,H)$. Sometimes the choice of $g$ is unimportant, so we use $\text{Hom}_{\shortto h}(G,H)$ to notate the rooted homomorphism count $\text{Hom}_{g\shortto h}(G,H)$ where $g$ can be fixed as any arbitrary node in $V(G)$. We use $\text{Hom}(G,\cdot)$ to denote the function which maps any graph $H$ to the integer $\text{Hom}(G,H)$, and we define rooted homomorphism count mappings analogously. If $\mathcal{G}$ is a collection of graphs, then we treat $\text{Hom}(\mathcal{G},\cdot)$ as a function that maps any input graph $H$ to the ordered\footnote{If the graphs in $\mathcal{G}$ are not ordered initially, we arbitrarily fix some indexing of the elements in $\mathcal{G}$ so that they become ordered.} tuple (or integer vector) defined as $\text{Hom}(\mathcal{G},H)=(\text{Hom}(G,H))_{G\in\mathcal{G}}$, and analogously for rooted counts. 

\cite{CurticapeanDM17} describe \emph{graph motif parameters} as functions $\Gamma(\cdot)$ that map graphs into $\mathbb{Q}$, such that there exists a \emph{basis} of graphs $\text{Supp}(\Gamma)=\{F_i\}_{i=1}^\ell$ and corresponding coefficients $\{\alpha_i\}_{i=1}^\ell\subseteq\mathbb{Q}\backslash\{0\}$ which decompose $\Gamma$ as a finite linear combination $\Gamma(\cdot)=\sum_{i=1}^\ell\alpha_i\cdot\text{Hom}(F_i,\cdot)$. Given $G$, the function which maps any $H$ to the number of times that $G$ appears as a subgraph\footnote{We count the number of times one can find a graph $H'$ with $V(H')\subseteq V(H)$ and $E(H')\subseteq E(H)$ such that \ $G\cong H'$.} of $H$ is a graph motif parameter \citep{Lovasz12} whose basis is known as $\text{Spasm}(G)$. Many mappings of interest can be described as graph motif parameters, so the theory of such bases bolsters homomorphism counts as broadly informative and powerful \citep{DBLP:conf/icml/JinBCL24}.

\cite{nguyen2020graph} present a generalization of homomorphism counts that allow for vertex weighting. For graph $G$, let $\omega:V(G)\to\mathbb{R}_{\geq0}$ describe \emph{node weights}. The \emph{weighted homomorphism count} from a graph $F$ into $G$ is given as:
$$\omega\text{-Hom}(F,G)=\sum_{f\in\mathcal H}\left(\prod_{v\in V(F)}\omega(f(v))\right)$$
where $\mathcal H$ is the set of all homomorphisms from $F$ to $G$. We define the node-rooted version $\omega\text{-Hom}_{u\shortto v}(F,G)$ by restricting $\mathcal{H}$ to only those homomorphisms which map $u\in V(F)$ to $v\in V(G)$. The mapping $\omega\text{-Hom}_{\shortto(\cdot)}(F,\cdot)$ is defined analogously to the un-weighted case: the graph-node pair $(G,v)$ gets mapped to $\omega\text{-Hom}_{u\shortto v}(F,G)$ where $u\in V(F)$ is fixed arbitrarily. Setting $\omega(v)=1$ for all $v\in V(G)$ recovers the un-weighted count $\omega\text{-Hom}(F,G)=\text{Hom}(F,G)$, and similarly for the node-rooted version. Weighted homomorphism counts are well-studied in the context of their connection to graph isomorphism \citep{freedman2004reflectionpositivityrankconnectivity, cai2021theoremlovaszhomcdoth}, and in the context of the universal approximation capabilities and empirical performance of graph classifiers which use $\omega\text{-Hom}(F,\cdot)$ counts as a graph embedding ~\citep{nguyen2020graph}.

\textbf{Expressive Power.}  \;  A \emph{graph invariant} is a function $\xi(\cdot)$ which acts on graphs, satisfying $\xi(G)=\xi(H)$ whenever $G\cong H$ for all graphs $G$ and $H$. For indexed families of graph invariants $\{A_i\}_{i\in I}$ and $\{B_j\}_{j\in J}$, we define the \emph{expressivity} relation $A\preceq B$ to denote: for any choice of $i\in I$, there exists a choice of $j\in J$ such that $B_j(G) = B_j(H)$ implies that $A_i(G) = A_i(H)$ for all $G$ and $H$. If $A \preceq B$ and $B \preceq A$, then we say that $A \simeq B$. We write $A \prec B$ when $B$ has strictly greater expressive power, $A \preceq B$ but $A\not\simeq B$. When we reference $\preceq$ for some fixed graph invariant $\xi(\cdot)$, we interpret $\xi$ as a family which contains only one graph invariant (hence the indexing is trivial). Noting that a GNN architecture can be treated as a family of graph invariants indexed by the model weights, the relation $\preceq$ generalises common notions of the graph-distinguishability expressive power for GNNs \citep{zhang2023expressivepowergraphneural}. As we will see in \Cref{sec:main}, the $\preceq$ relation naturally extends to expressivity comparisons between structural/positional encoding schemes as well.

\textbf{MPNNs and Weisfeiler-Lehman Tests.}  \; %
Given graph $G$, the 1-WL test induces the node coloring $c^{(t)}:V(G)\to\Sigma$ for all $t$ up until some termination step $T$, at which point the graph-level 1-WL label is defined as the multiset of final node colors $\text{1-WL}(G)=\mbrak{c^{(T)}(v):v\in V(G)}$. Graphs $G$ and $H$ are considered \emph{1-WL indistinguishable} if they have identical graph labels $\text{1-WL}(G)=\text{1-WL}(H)$. Crucially, it holds that $\text{MPNN}\preceq\text{1-WL}$ where we treat ``MPNN" as a family of graph invariants indexed by both the choice of message-passing architecture and the model parameters, and we treat 1-WL as a singleton family containing only the graph invariant $G\mapsto \text{1-WL}(G)$ ~\citep{xu18}.

We extend the Weisfeiler-Lehman test to ``higher dimensions" by coloring $k$-tuples of nodes. The $k$-WL test induces node tuple coloring $c^{(t)}:V(G)^k\to\Sigma$ for $t=1,...,T$ iterations, and then it aggregates a final graph-level $k$-WL label analogously to $1$-WL ~\citep{Huang_2021}. In this work, we refer to the \emph{folklore $k$-WL test}, which is provably equivalent to the alternative \emph{oblivious $k$-WL} formulation up to a re-indexing of $k$ ~\citep{GroheOtto15}. It has been shown that the $k$-WL tests form a well-ordered hierarchy of expressive power, where $k\text{-WL}\prec(k+1)\text{-WL}$ ~\citep{Cai1989AnOL}. Here, we treat $k$-WL as a singleton family (trivial indexing) for each particular choice of $k$. 

\textbf{Self-Attention and Positional/Structural Encoding.}  \; Central to any Transformer architecture is the \emph{self-attention} mechanism. When applied to graphs, a single ``head" of self-attention learns a $t^{\text{th}}$-layer hidden representation $\bm h_v^{(t)}\in\mathbb{R}^d$ of every node $v\in V(G)$ by taking the weighted sum $\bm h_v^{(t)}=\phi^{(t)}(\sum_{u\in V(G)}\alpha^{(t)}_{v,u}\bm h_{u}^{(t-1)})$ where the \emph{attention coefficients} are $\alpha^{(t)}_{v,u}=\psi^{(t)}(\bm h_v^{(t-1)},\bm h_u^{(t-1)})$. Here, $\phi^{(t)}$ and $\psi^{(t)}$ are learnably-parameterized functions. The most common implementation of $\phi^{(t)}$ and $\psi^{(t)}$ is \emph{scaled dot product attention}, as defined by \cite{vaswani2017attention}; although some Transformers such as GRIT \citep{grit} deviate from this. In most Transformers, we utilize multiple attention heads in parallel (whose outputs are concatenated at each layer), and we interweave self-attention layers with fully-connected feed-forward layers. 

Since basic self-attention does not receive the adjacency matrix as an input, many Graph Transformer models choose to inject a graph's structural information into the model inputs by way of node positional/structural encoding. For example, the widely used \emph{$k$-length random walk structural encoding} ($\text{RWSE}_{k}$) assigns to each node its corresponding diagonal entry of the degree normalized adjacency matrix\footnote{Here, $\bm D$ notates the diagonal degree matrix, and $\bm A$ the adjacency matrix.} $(\bm D ^{-1}\bm A)^i$ for all powers $i=1,...,k$ ~\citep{gps,rwse,grit}. Each node's $\text{RWSE}_{k}$ vector is then concatenated or added to the node's initial feature vector,\footnote{Sometimes, the initial node feature and the RWSE vector are passed through MLPs before being combined and fed into the GNN.} and the resulting node embedding is passed into the Transformer as an input. More broadly, we can define \emph{node positional or structural encoding} to mean any isomorphism-invariant mapping $pe$ which takes in a node-graph pair $(v,G)$ and outputs a vector label $pe(v,G)\in\mathbb{R}^d$. Then, the graph-level PE label is defined as the multiset of node-level labels $\text{PE}(G)=\mbrak{pe(v,G):v\in V(G)}$. In this way, we can treat $\text{RWSE}_{k}$ as a family of graph invariants (mapping into multisets with elements in $\mathbb{R}^d$) indexed by $k$. Hence, we can compare the graph-level expressivity of $\text{RWSE}_{k}$ and any other general positional/structural encoding scheme PE under $\preceq$. 

\section{Homomorphism Counts as a Graph Inductive Bias}
\label{sec:main}
In this section, we formally define motif structural encoding (\ourpe). Just as with other node-level structural or positional encodings, \ourpe provides a way to encode the structural characteristics of a node in terms of a numerical vector. Such encodings then provide graph inductive bias to models, such as Transformers, that cannot (or only in a limited fashion) take graph structure into account. All proof details are provided in \Cref{app:theory}.

\subsection{Motif Structural Encoding (\ourpe)}
\label{sec:ourpe-intro}

The \emph{motif structural encoding} (MoSE) scheme is parameterized by a choice of a finite\footnote{We require that $\mathcal{G}$ be a finite set, and that each graph within $\mathcal{G}$ have finite size.} pattern graph family $\mathcal{G} = \{G_1,\dots, G_d\}$, as well as a choice of node weighting scheme $\omega$ which sends any $(v,H)$ to a non-negative weight $\omega(v,H)\in\mathbb{R}_{\geq0}$. For each node $v$ of graph $H$, the node-level $\text{MoSE}_{\mathcal{G},\omega}$ label of $v$ is:
\begin{equation}
    \text{MoSE}_{\mathcal{G},\omega}(v,H) = \bigg[\omega\text{-Hom}_{\shortto v}(G_i,H)\bigg]_{i=1}^d\in\mathbb{R}^d
    \label{eq:evdef}
\end{equation}
We will notate $e^\aG_v := \text{MoSE}_{\mathcal{G},\omega}(v,H)$ for shorthand when $\omega$ and $H$ are either clear from context, or any arbitrary choice can be made. Unless otherwise specified, we use the constant $\omega$ weighting which sends all nodes in all graphs to 1, aligning $\text{MoSE}_{\mathcal{G},\omega}$ with the un-weighted homomorphism counts used in previous works \citep{DBLP:conf/icml/JinBCL24,BarceloGRR21}. In this case, we omit $\omega$ from our notation, using $\text{MoSE}_{\mathcal{G}}$ to denote our encoding. When we reference $\text{MoSE}_{\mathcal{G},\omega}$ at the graph-level, we mean the multiset of node-labels $\text{MoSE}_{\mathcal{G},\omega}(H)=\mbrak{e^\aG_v:v\in V(H)}$. It holds that $\text{MoSE}_{\mathcal{G},\omega}(\cdot)$ is a graph invariant because for any two graphs $H$ and $F$ with isomorphism $\iota : V(H) \to V(F)$, we have $e^\aG_v = e^\aG_{\iota(v)}$ for every vertex $v \in V(H)$ ~\citep{nguyen2020graph}.

\ourpe offers a highly general and flexible approach to structural encoding on account of two key parameters: the graphs $\aG$ and the vertex weight function $\omega$. Both of these parameters can be adapted to fit the problem domain and the model architecture as desired. The choice of $\aG$ can be informed in precise terms by desired levels of expressiveness as shown below in Proposition~\ref{prop:motif}. Furthermore, the choice of $\aG$ can build on a range of empirical studies on structural information in MPNNs \citep{BarceloGRR21,DBLP:conf/icml/JinBCL24,wang2023towards,BouritsasFZB23}. Additional choice of a non-trivial weight function $\omega$ adds further power and flexibility. In particular, we show that even a simple weight function that maps nodes to the reciprocal of their degree is enough to exactly express RWSE in terms of \ourpe (Proposition~\ref{prop:mose_determines_rwse}).

In the following, we will provide insight into the expressivity of \ourpe by leveraging established connections between homomorphism counts and MPNNs. For Transformer architectures, similar frameworks of expressivity are not yet established. Recent work by~\cite{DBLP:conf/iclr/RosenbluthTRKG24} shows that Transformer architectures do not inherently contribute to expressivity, and that positional/structural encoding is instead the key ingredient of their expressivity. We therefore study the expressivity of the encodings themselves, which in turn translates to the expressivity of models that utilize – and heavily depend on – these encodings. In particular, any typical Graph Transformer architecture with \ourpe will naturally be at least as expressive as \ourpe, and thus inherit all lower bounds from our analysis.

Our first two propositions establish that \ourpe is, in a sense, incomparable to the WL-hierarchy. For every fixed set of graphs $\aG$ that induces an encoding, we can provide an upper bound in terms of $k$-WL for some $k$ dependent on $\aG$. At the same time, given any fixed choice of $k$, the expressiveness of even \ourpe encodings induced by a single pattern graph cannot be confined to the expressive power of $k$-WL (Proposition~\ref{prop:singleton.nowl}).
\begin{proposition}
\label{prop:tw}
    Let $\mathcal{G}$ be a finite set of graphs and let $k$ be the maximum treewidth of a graph in $G$. Then, $\text{\ourpe}_\mathcal{G}$ is at most as distinguishing as $k$-WL. That is, $\text{MoSE}_\aG\preceq k\text{-WL}$.\footnote{With respect to the definition of $\preceq$, both sides in this case represent singleton families of graph invariants.}
\end{proposition}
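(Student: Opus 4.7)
The plan is to invoke a rooted variant of Dvořák's classical characterization of the Weisfeiler-Leman hierarchy in terms of homomorphism counts. Recall that two graphs $G, H$ are indistinguishable by folklore $k$-WL if and only if $\homs(F, G) = \homs(F, H)$ for every graph $F$ with $\tw(F) \leq k$. The key ingredient I need is a standard pointed refinement of this fact: if the $k$-WL stable color of the constant tuple $(v, \dots, v) \in V(G)^k$ equals the stable color of $(u, \dots, u) \in V(H)^k$, then for every rooted pattern $(F, w)$ with $\tw(F) \leq k$ one has $\homs_{w \shortto v}(F, G) = \homs_{w \shortto u}(F, H)$.

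Given this rooted fact, the proof is short. Assume $G$ and $H$ receive the same graph-level $k$-WL label, i.e., the multisets of stable $k$-tuple colors agree. Constant tuples $(v, \dots, v)$ are already distinguishable from non-constant tuples at initialization in folklore $k$-WL (their initial color records that all $k$ coordinates coincide), so projecting onto the constant-tuple slice yields equal multisets of induced node-level colors on $G$ and $H$. Fix any color-preserving bijection $\pi : V(G) \to V(H)$ obtained from this equality. For each pattern $G_i \in \aG$, choose an arbitrary root $w_i \in V(G_i)$; since $\tw(G_i) \leq k$, the rooted extension yields $\homs_{w_i \shortto v}(G_i, G) = \homs_{w_i \shortto \pi(v)}(G_i, H)$ for every $v \in V(G)$. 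Consequently $e^{\aG}_v = e^{\aG}_{\pi(v)}$ for all $v$, and therefore $\ourpe_\aG(G) = \ourpe_\aG(H)$ as multisets, establishing $\ourpe_\aG \preceq k\text{-WL}$.

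The main obstacle, and the only step that requires real care, is pinning down the rooted version of Dvořák's theorem for folklore $k$-WL as used in the paper. The cleanest route is to encode the choice of root as a vertex coloring: rooting $F$ at $w$ becomes a colored graph, coloring preserves treewidth, and the standard proof of Dvořák's theorem (via tree decompositions of the pattern together with a dynamic program whose states correspond exactly to stable $k$-tuple colors of the host) transfers directly to the colored/pointed setting. Everything else in the argument -- projecting tuple colors to node colors via constant tuples, extracting the bijection $\pi$, and comparing multisets coordinate-wise across $\aG$ -- is routine bookkeeping.
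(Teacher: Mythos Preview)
Your proposal is correct and follows essentially the same approach as the paper: both invoke Dvo\v{r}\'ak's characterization of $k$-WL via homomorphism counts from treewidth-$\leq k$ patterns, together with its rooted/pointed extension, to conclude that equal $k$-WL node labels force equal $\homs_{\shortto v}$-vectors for every pattern in $\aG$. The paper simply cites the rooted version from the literature (Lanzinger--Barcel\'o, Lemma~12) and leaves the multiset bookkeeping implicit, whereas you spell out both the coloring-based derivation of the rooted statement and the constant-tuple/bijection argument; this is extra detail rather than a different route.
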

Although we cannot distinguish \emph{all} graphs distinguished by $k$-WL using $\text{\ourpe}_\mathcal{G}$ with a finite $\mathcal{G}$, we can distinguish any particular pair of graphs distinguished by $k$-WL. In fact, we only need a single graph in $\mathcal{G}$ in order to do this. 
\begin{proposition}
\label{prop:singleton.nowl}
    For $k \geq 1$, let $G$ and $H$ be two non-isomorphic graphs that are equivalent under $k$-WL. Then, there exists a graph $F$ such that $\text{\ourpe}_{\{F\}}$ distinguishes $G$ and $H$.
\end{proposition}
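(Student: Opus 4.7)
The plan is to invoke Lovász's classical homomorphism counting theorem, which states that two finite graphs are isomorphic if and only if they admit the same number of homomorphisms from every finite graph $F$. Applying the contrapositive to the non-isomorphic pair $G$ and $H$, I obtain a single graph $F$ such that $\homs(F,G) \neq \homs(F,H)$. It is worth noting that the $k$-WL equivalence hypothesis is not actually used in the argument; the proposition merely highlights that MoSE can distinguish pairs that lie beyond the reach of the WL hierarchy.

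Next, I would reduce the un-rooted count to rooted counts by fixing an arbitrary vertex $u \in V(F)$ and partitioning homomorphisms from $F$ according to the image of $u$. This gives the identity
\begin{equation*}
\homs(F, H') = \sum_{v \in V(H')} \homs_{u \shortto v}(F, H')
\end{equation*}
for any host graph $H'$. Applied to both $G$ and $H$, and combined with the previous step, this yields $\sum_{v \in V(G)} \homs_{u \shortto v}(F,G) \neq \sum_{v \in V(H)} \homs_{u \shortto v}(F,H)$.

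To finish, I would take $\aG = \{F\}$ and the constant weighting $\omega \equiv 1$, so that each node-level encoding $e_v^{\aG} = \homs_{\shortto v}(F, \cdot)$ is a scalar, and the graph-level invariant $\text{MoSE}_{\{F\}}$ is simply the multiset of these scalars over $v \in V(\cdot)$. If the multisets for $G$ and $H$ coincided, their sums would also coincide, contradicting the inequality of the preceding step. Hence the multisets differ and $\text{MoSE}_{\{F\}}$ distinguishes $G$ from $H$.

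There is no substantive obstacle here: the argument is a direct pairing of Lovász's theorem with the elementary observation that the total (un-rooted) homomorphism count is recoverable as the sum over a multiset of rooted counts. The only minor subtlety is keeping track of the fact that MoSE is defined at the graph level as a multiset rather than as a per-vertex quantity, but this is precisely what makes the passage from unrooted to rooted counts clean.
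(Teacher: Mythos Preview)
Your proposal is correct and follows essentially the same approach as the paper: invoke Lov\'asz's theorem to obtain $F$ with $\homs(F,G)\neq\homs(F,H)$, then conclude that $\text{MoSE}_{\{F\}}$ distinguishes the pair. The paper's proof is terser and simply asserts the final step, whereas you spell out explicitly that the unrooted count is the sum of the rooted counts over the multiset, so equal multisets would force equal totals; this extra detail is a welcome clarification rather than a different argument.
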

\cite{LanzingerPB24} and \cite{DBLP:conf/icml/JinBCL24} studied the expressivity of MPNNs with respect to functions that map graphs to numbers. For a large class of these \emph{graph motif parameters}, the distinguishing power of such a function relates closely to homomorphism counts from its homomorphism basis \citep{DBLP:conf/icml/JinBCL24}. Building on these results, we establish a very broad lower bound for \ourpe-enhanced Transformers while also providing a practical method for selecting $\aG$.
\begin{proposition}
\label{prop:motif}
    Let $f$ be a graph motif parameter with basis $\aG_f \subseteq \aG$. Then $\text{\ourpe}_{\aG}$ is at least as distinguishing as $f$. That is, if $\mathcal F$ is the family of all graph motif parameters, we have $\mathcal F\preceq \text{MoSE}$.
\end{proposition}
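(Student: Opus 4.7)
The plan is to show that the graph-level \ourpe multiset determines any graph motif parameter whose basis is contained in $\aG$. Concretely, given a motif parameter $f = \sum_{i=1}^\ell \alpha_i \cdot \text{Hom}(F_i, \cdot)$ with basis $\aG_f = \{F_1, \ldots, F_\ell\} \subseteq \aG$, I will recover each unrooted count $\text{Hom}(F_i, H)$ directly from $\text{MoSE}_\aG(H) = \mbrak{e^\aG_v : v \in V(H)}$, from which $f(H)$ follows by taking the same linear combination.

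The key observation is the standard rooted-to-unrooted identity: for any pattern graph $F$, any fixed root $u \in V(F)$, and any target graph $H$,
\[
\text{Hom}(F, H) = \sum_{v \in V(H)} \text{Hom}_{u \shortto v}(F, H),
\]
since every homomorphism from $F$ to $H$ sends $u$ to exactly one vertex of $H$, and partitioning the homomorphisms by the image of $u$ produces the sum. In particular, summing the $i$-th coordinate of $e^\aG_v$ over all $v \in V(H)$ yields $\text{Hom}(G_i, H)$ for each $G_i \in \aG$, so the entire vector of unrooted counts $[\text{Hom}(G_i, H)]_{i=1}^{d}$ is a function of the multiset $\text{MoSE}_\aG(H)$ alone.

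Therefore, whenever $\text{MoSE}_\aG(G) = \text{MoSE}_\aG(H)$, the componentwise multiset sums agree, giving $\text{Hom}(G_i, G) = \text{Hom}(G_i, H)$ for every $G_i \in \aG$, and in particular for every $F_i \in \aG_f$. Applying the same $\sum_i \alpha_i (\cdot)$ combination to both sides yields $f(G) = f(H)$, so $f \preceq \text{MoSE}_\aG$. For the family-level statement $\mathcal{F} \preceq \text{MoSE}$, the required witness for each $f \in \mathcal{F}$ is simply the index $\aG := \aG_f$, which exists by the very definition of a graph motif parameter.

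The ``hard part'' is more a bookkeeping check than a genuine obstacle: one must be careful that the summation step is legitimate starting from a \emph{multiset} rather than a node-indexed collection of vectors. This is fine because the sum of a multiset of vectors is intrinsically well-defined (counting with multiplicity) and invariant under any presentation of the multiset, so no information about the identity of the summands is needed. Once this is acknowledged, the proof is a two-line chain of (i) the rooted-sum identity and (ii) the definition of a graph motif parameter.
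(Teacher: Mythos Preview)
Your proof is correct and follows essentially the same approach as the paper's: both arguments reduce to the observation that equal \ourpe multisets force $\homs(F,G) = \homs(F,H)$ for every $F$ in the basis, after which $f(G)=f(H)$ follows from the definition of a graph motif parameter. The paper phrases this as a contradiction and leaves the rooted-to-unrooted summation implicit, whereas you argue directly and spell out that step explicitly.
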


\subsection{Comparing the Expressivity of \ourpe to RWSE}
\label{sec:expr}

\begin{figure}[t]
    \centering
    \includegraphics[clip, trim=0 0 18.50cm 8cm,width=1\linewidth]{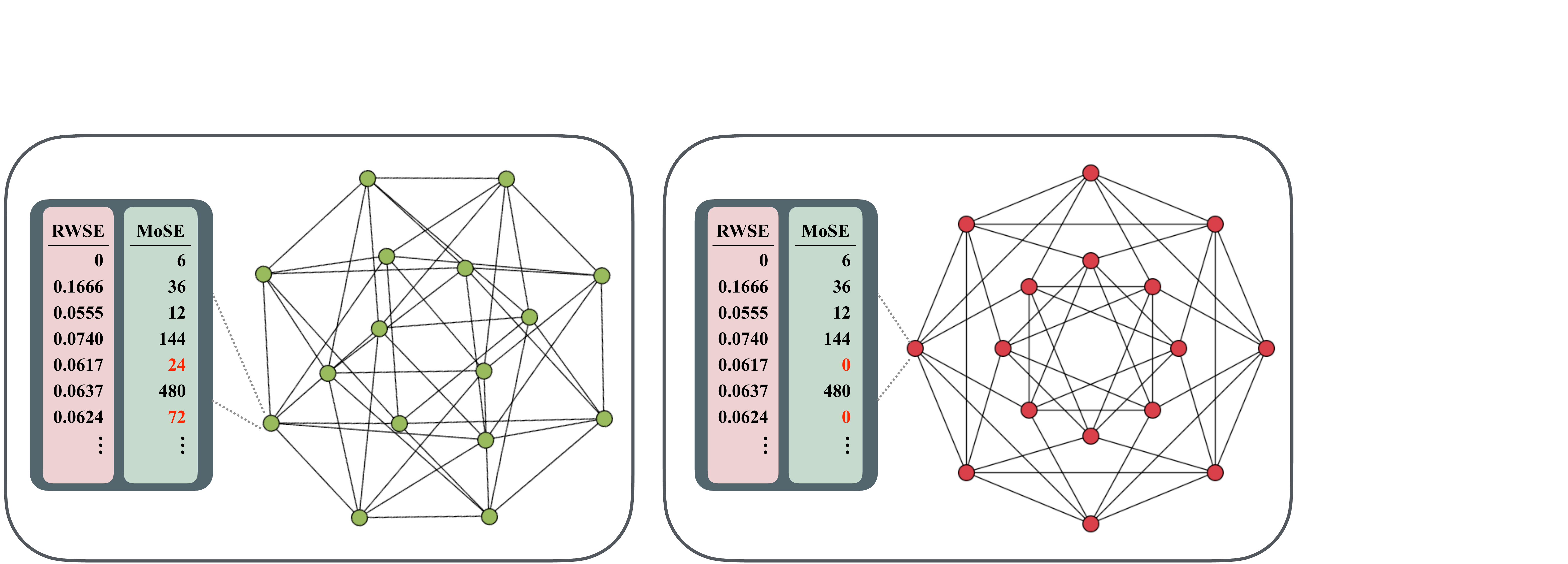}
    \caption{The $4\times 4$ Rook's Graph (left) and the Shrikhande Graph (right) are non-isomorphic strongly regular graphs that have the same regularity parameters. RWSE produces the same vector on every vertex of both graphs whereas a simple construction of \ourpe using $\spasm(C_7) \cup \spasm(C_8)$ homomorphism counts easily distinguishes the two graphs.}
    \label{fig:3wl}
\end{figure}

From above we see that \ourpe is closely aligned to the $k$-WL hierarchy, and we can use this alignment to inform the choice of $\aG$. However, it is not possible to entirely contain \ourpe within the WL hierarchy. In the following, we show that this is not the case for RWSE. In fact, the expressiveness of RWSE, regardless of length parameter, is fully contained within 2-WL.

\begin{proposition}
\label{prop:2wl}
For every $\ell \geq 2$, any graph which can be distinguished by $\text{RWSE}_{\ell}$ can be distinguished by 2-WL. That is, $\text{RWSE}\preceq\text{2-WL}.$
\end{proposition}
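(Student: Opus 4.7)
The plan is to show that whenever $G$ and $H$ are indistinguishable by 2-WL, they are also indistinguishable by $\text{RWSE}_\ell$ for every $\ell$. Concretely, I will argue that the stable folklore 2-WL pair-coloring is fine enough to determine each diagonal entry $((\bm D^{-1}\bm A)^k)_{vv}$ for every node $v$ and every $k \geq 1$. Once this is established, the proposition follows immediately, since equality of graph-level 2-WL labels forces the multiset of diagonal pair colors on $G$ to agree with that on $H$, and hence the multisets of per-node RWSE vectors to agree.

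The heart of the argument is an induction on $k$. Write $\bm M = \bm D^{-1}\bm A$ and let $c(u,v)$ denote the stable folklore 2-WL color of the pair. For the base case, I would observe two facts about the stable coloring: (i) the initial 2-WL color of $(u,v)$ already records the adjacency indicator $\bm A_{u,v}$, and (ii) each folklore update at $(u,v)$ aggregates contributions from intermediate $w \in V$, in particular $w = u$ and $w = v$, so $c(u,v)$ determines $c(u,u)$ (and $c(v,v)$), which in turn determines the degree $d(u) = \sum_w \bm A_{u,w}$. Hence $\bm M_{u,v} = \bm A_{u,v}/d(u)$ is a function of $c(u,v)$. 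For the inductive step, suppose $\bm M^{k-1}_{u,v}$ depends only on $c(u,v)$. Whenever $c(u,v) = c(u',v')$, stability gives a bijection on intermediate vertices that matches the multiset $\{(c(u,w), c(w,v)) : w \in V\}$ with $\{(c(u',w), c(w,v')) : w \in V\}$, and combined with the inductive hypothesis applied to both factors, the sum $\bm M^k_{u,v} = \sum_w \bm M_{u,w}\bm M^{k-1}_{w,v}$ takes the same value at $(u,v)$ and $(u',v')$.

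Specializing to $u = v$, the $k$-th RWSE coordinate $(\bm M^k)_{vv}$ is a function of $c(v,v)$ alone, so the full vector $[(\bm M^k)_{vv}]_{k=1}^{\ell}$ is determined by the stable 2-WL color of the diagonal pair at $v$. If $\text{2-WL}(G) = \text{2-WL}(H)$, then the multiset of stable pair colors is the same on both graphs; restricting to diagonal pairs gives an equality of multisets of RWSE vectors, which is exactly $\text{RWSE}_\ell(G) = \text{RWSE}_\ell(H)$.

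The main technical nuisance, rather than a deep obstacle, will be making precise the statement ``$\bm M^k_{u,v}$ is a function of $c(u,v)$'' across two different graphs, since this requires a shared color alphabet. The cleanest way I see to handle this is to run 2-WL on the disjoint union $G \sqcup H$ (noting that pair colors within each component evolve independently of the other) and perform the induction with respect to that common stable coloring. Everything else reduces to a careful bookkeeping of the folklore update rule; no finer expressivity fact is needed, and the strictness of the containment implicit in $\preceq$ is already witnessed by \Cref{fig:ex1}(a), where 1-WL (a fortiori 2-WL) separates $u_1$ and $u_2$ while no $\text{RWSE}_\ell$ does.
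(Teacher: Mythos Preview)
Your argument is correct and in fact more direct than the paper's. The paper proves the proposition by routing through the \emph{$k$-walk refinement} procedure of Lichter, Ponomarenko, and Schweitzer: it recalls that the stable partition of $k$-walk refinement coincides with 2-WL, and then shows that after two rounds of $k$-walk refinement the label of a diagonal pair $(v,v)$ already encodes all closed walks of length at most $k$ together with the degrees along them, hence the full $\text{RWSE}_k$ vector. You instead work directly with the stable folklore 2-WL coloring and prove by induction on the exponent that each entry $(\bm M^k)_{uv}$ is a function of the stable color $c(u,v)$, using only the defining update rule. Your route is more self-contained (no external lemma), whereas the paper's route yields the sharper byproduct that two rounds of $k$-walk refinement already suffice, which can be translated into a bound on how many 2-WL rounds are needed.

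Two small remarks. First, your parenthetical that ``pair colors within each component evolve independently'' in $G \sqcup H$ is not literally true: the folklore update at a within-$G$ pair ranges over all $w \in V(G)\cup V(H)$. This does not hurt you, since what you actually need is the standard fact that 2-WL equivalence of $G$ and $H$ is preserved under passing to the disjoint union (equivalently, 2-WL colors can be taken canonical and hence comparable across graphs). Second, the final sentence about strictness is extraneous to the proposition as stated, and \Cref{fig:ex1}(a) witnesses a node-level separation rather than a graph-level one; if you want graph-level strictness of $\text{RWSE} \preceq \text{2-WL}$ you would need a pair of graphs, not just a pair of nodes, with equal RWSE multisets but different 2-WL labels.
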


This in itself has wide-ranging consequences. For instance, it is known that 2-WL cannot distinguish strongly regular graphs \citep{DBLP:journals/tcs/FuhlbruckKPV21}, and therefore neither can RWSE. For \ourpe, there exist no such limitations (\Cref{fig:3wl}) beyond those dependent on the choice of $\aG$ (Proposition~\ref{prop:tw}).

While \ourpe cannot fully capture 2-WL, it is in fact possible to express RWSE$_\ell$ for every $\ell \geq 2$ as a special case of \ourpe. In contrast to previous results, our result here requires a node-weighting scheme that is not constant. However, even straightforward node-weighting by degrees -- which comes at no cost in practice -- is enough for our proof. Letting $\aG$ consist of cycle graphs with size at most $\ell$ gives us the desired result (see \Cref{app:theory} for details).

\begin{proposition}
\label{prop:mose_determines_rwse}
    For any $\ell\in\mathbb{N}$, there exists a finite family of graphs $\aG$ and a weighting scheme $\omega$ such that the node-level $\text{RWSE}_\ell(v,H)$ label is uniquely determined by $\text{MoSE}_{\mathcal{G},\omega}(v,H)$ for all $v$ and $H$. In fact, $\text{RWSE}_\ell$ is simply the special case of $\text{MoSE}_{\mathcal{G},\omega}$ where $\omega:v\mapsto 1/d(v)$ and $\aG=\{C_i\}_{i=1}^{\ell}$.
\end{proposition}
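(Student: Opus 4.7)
The approach is to establish, for every $i \in \{1,\ldots,\ell\}$, the coordinate-wise identity
\[
\omega\text{-Hom}_{\shortto v}(C_i, H) \;=\; (\bm D^{-1}\bm A)^{i}_{vv}
\]
under the weighting $\omega(u) = 1/d(u)$. The left-hand side is the $i$-th coordinate of $\text{MoSE}_{\{C_1,\ldots,C_\ell\},\omega}(v,H)$ while the right-hand side is the $i$-th coordinate of $\text{RWSE}_\ell(v,H)$, so stacking these identities over $i = 1,\ldots,\ell$ yields the equality of whole vectors asserted by the proposition; in particular, $\text{RWSE}_\ell$ is fully determined by (and indeed equal to) this special case of $\text{MoSE}$.

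The two key steps are essentially combinatorial. First, I would expand the matrix power as a sum over sequences,
\[
(\bm D^{-1}\bm A)^i_{vv} \;=\; \sum_{v = v_0, v_1, \ldots, v_i = v} \prod_{j=0}^{i-1} \frac{\bm A_{v_j v_{j+1}}}{d(v_j)},
\]
noting that the only non-vanishing contributions come from closed walks of length $i$ rooted at $v$, each weighted by $\prod_{j=0}^{i-1} 1/d(v_j)$. Second, I would invoke the standard bijection between such closed walks and rooted homomorphisms $f: C_i \to H$ with $f(u_0) = v$, for a fixed root $u_0 \in V(C_i)$: giving such an $f$ is precisely giving a cyclically adjacent sequence $v = f(u_0), f(u_1), \ldots, f(u_{i-1})$ of images in $V(H)$. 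Unfolding the weighted homomorphism count via this bijection then gives
\[
\omega\text{-Hom}_{\shortto v}(C_i, H) \;=\; \sum_f \prod_{u \in V(C_i)} \omega(f(u)) \;=\; \sum_{\text{walks}} \prod_{j=0}^{i-1} \frac{1}{d(v_j)},
\]
which matches the expansion above.

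The main subtlety, and really the only obstacle, is the treatment of $C_1$ and $C_2$, which are not simple graphs under the paper's standing convention: $C_1$ is a self-loop and $C_2$ a digon. Since $H$ is simple, no homomorphism $C_1 \to H$ exists, so $\omega\text{-Hom}_{\shortto v}(C_1,H) = 0 = (\bm D^{-1}\bm A)^{1}_{vv}$, and the identity for $i=1$ holds trivially. Rooted homomorphisms from $C_2$ correspond bijectively to choices of neighbour $u \in \mathcal{N}(v)$, and the weighted count $\sum_{u \in \mathcal{N}(v)} 1/(d(v)d(u))$ agrees with $(\bm D^{-1}\bm A)^{2}_{vv}$ by direct calculation. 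Once these two boundary conventions are in place, the walk-to-homomorphism correspondence and the coordinate identity hold uniformly for all $i \geq 1$, and the proposition follows by concatenating the $\ell$ scalar identities into the claimed vector equality.
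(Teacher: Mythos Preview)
Your proposal is correct and follows essentially the same approach as the paper: establish the bijection between rooted homomorphisms $C_i \to H$ and closed $i$-walks at $v$, then match the degree-reciprocal weights against the expansion of $(\bm D^{-1}\bm A)^i_{vv}$. The only cosmetic differences are that the paper routes the matrix-power identity through an inductive random-walk probability interpretation rather than a direct combinatorial expansion, and it handles the degenerate cases by declaring $C_1$ to be the empty graph and $C_2$ a single edge (rather than your self-loop and digon), but these conventions lead to the same counts into simple $H$.
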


Proposition~\ref{prop:mose_determines_rwse} immediately shows that $\text{RWSE}\preceq \text{MoSE}$, and the example from \Cref{fig:3wl} demonstrates that this inclusion is in fact strict.

\begin{theorem}
\label{thm:main}
    Motif structural encoding is strictly more expressive than random walk structural encoding: $\text{RWSE}\prec \text{MoSE}$.
\end{theorem}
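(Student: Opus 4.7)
The plan is to dispatch both parts of $\preceq$ quickly and then focus effort on the strictness witness. The first direction $\text{RWSE}\preceq\text{MoSE}$ is an immediate corollary of Proposition~\ref{prop:mose_determines_rwse}: for any length $\ell$, the specific instance of \ourpe using $\aG=\{C_i\}_{i=1}^{\ell}$ together with the node weighting $\omega:v\mapsto 1/d(v)$ determines $\text{RWSE}_\ell(v,H)$ at every node, so any pair of graphs separated by $\text{RWSE}_\ell$ is already separated by this \ourpe instance. No further argument is needed here.

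For strictness, I would exhibit a single choice of parameters $(\aG,\omega)$ such that some graph pair is distinguished by $\text{MoSE}_{\aG,\omega}$ yet remains indistinguishable to $\text{RWSE}_\ell$ for every $\ell$. The natural candidates, as advertised in Figure~\ref{fig:3wl}, are the $4\times 4$ Rook's graph $R$ and the Shrikhande graph $S$. Both are strongly regular with identical parameters $(16,6,2,2)$, and it is known that strongly regular graphs sharing their parameters are 2-WL equivalent~\citep{DBLP:journals/tcs/FuhlbruckKPV21}. Combining this with Proposition~\ref{prop:2wl} yields $\text{RWSE}_\ell(R)=\text{RWSE}_\ell(S)$ for every $\ell$, so no choice of length parameter lets RWSE separate them.

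Next I would verify that \ourpe with $\aG=\spasm(C_7)\cup\spasm(C_8)$ and the trivial weighting does separate $R$ from $S$. Since the number of copies of $C_k$ is a graph motif parameter with basis $\spasm(C_k)$, it suffices to check that $R$ and $S$ have different numbers of $7$-cycles or $8$-cycles; by Proposition~\ref{prop:motif}, the resulting difference is then witnessed in at least one coordinate of the \ourpe encoding at the graph level. Together with the upper bound just established, this gives $\text{MoSE}\not\preceq\text{RWSE}$, completing the strict containment.

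The main obstacle is entirely the combinatorial verification that $R$ and $S$ actually differ on a cycle count captured by the chosen $\aG$. This is a finite check (these are classical constructions with well-documented cycle spectra) rather than a conceptual hurdle, but it is the only place the argument leaves the general machinery of Propositions~\ref{prop:mose_determines_rwse}, \ref{prop:2wl}, and~\ref{prop:motif} and touches specific combinatorics. If one preferred to avoid this check, an alternative is to use the running example of Figure~\ref{fig:ex1}(a) with $\aG=\{C_6\}$, invoking the displayed inequality $\homs_{\shortto v_1}(C_6,G)\neq\homs_{\shortto v_2}(C_6,H)$ together with the fact that $v_1$ and $v_2$ (and also $u_1,u_2$) are RWSE-equivalent for every $\ell$.
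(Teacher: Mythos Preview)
Your primary argument is correct and matches the paper's own proof: Proposition~\ref{prop:mose_determines_rwse} gives $\text{RWSE}\preceq\text{MoSE}$, and the Rook/Shrikhande pair witnesses strictness. Your justification for why RWSE fails on this pair (strongly regular with equal parameters $\Rightarrow$ 2-WL equivalent, then apply Proposition~\ref{prop:2wl}) is slightly more explicit than the paper, which simply asserts the RWSE vectors coincide; both are fine.

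However, your proposed \emph{alternative} via Figure~\ref{fig:ex1}(a) does not work. The relation $\prec$ is defined at the level of graph invariants, i.e., on the \emph{multisets} $\text{RWSE}_\ell(G)$ and $\text{RWSE}_\ell(H)$. Figure~\ref{fig:ex1}(a) only establishes that the individual nodes $v_1\in V(G)$ and $v_2\in V(H)$ (and $u_1,u_2$) receive equal RWSE labels; it does not claim that the full multisets agree. In fact they cannot: $G$ has $8$ vertices while $H$ has $7$, so $\text{RWSE}_\ell(G)\neq\text{RWSE}_\ell(H)$ trivially for every $\ell$. That example is designed to illustrate node-level weakness of RWSE relative to 1-WL (Proposition~\ref{prop:rwse-1wl-incomp}), not graph-level weakness relative to \ourpe. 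So the Rook/Shrikhande check is not optional; it (or some other graph-level witness) is required.
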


Finally, we note that on the node-level, there are even cases where RWSE is weaker than 1-WL. Specifically, there are nodes for which 1-WL assigns different labels, but RWSE$_\ell$ assigns the same label for every $\ell \geq 1$ (\Cref{fig:ex1}). There are also, of course, instances in the other way around.
\begin{proposition}
\label{prop:rwse-1wl-incomp}
    RWSE is incomparable to 1-WL in terms of node-level expressiveness.
\end{proposition}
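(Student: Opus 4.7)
The plan is to prove incomparability by exhibiting two witness configurations, one showing $\text{1-WL} \not\preceq \text{RWSE}$ and one showing $\text{RWSE} \not\preceq \text{1-WL}$.

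\textbf{Direction 1: $\text{1-WL} \not\preceq \text{RWSE}$.} I would invoke the pair of graphs already displayed in Figure~\ref{fig:ex1}(a). There, the nodes $u_1 \in V(G)$ and $u_2 \in V(H)$ are 1-WL-distinguishable: after a few iterations of colour refinement, the degree-$1$ endpoints of the path in $G$ propagate a distinguishing signal back through the middle vertex $v_1$ to $u_1$, while $u_2$, attached by a pendant to a $6$-cycle, never encounters such an asymmetry in its iterated neighbourhood multisets. On the other hand, as asserted prominently in the introduction, $\text{RWSE}_\ell(u_1, G) = \text{RWSE}_\ell(u_2, H)$ for every length $\ell$; I would take this as a standing property of the running example and rely on it directly. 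The pair $(u_1, u_2)$ then witnesses the failure of $\text{1-WL} \preceq \text{RWSE}$.

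\textbf{Direction 2: $\text{RWSE} \not\preceq \text{1-WL}$.} I would give a short explicit witness. Set $G = C_6$ and $H = 2C_3$, the vertex-disjoint union of two triangles. Both graphs are $2$-regular on six vertices, so running 1-WL with any symmetric initialisation (e.g., uniform colour or colour-by-degree) yields the same stable colour at every vertex of $V(G) \cup V(H)$. Consequently, any $u \in V(G)$ and any $v \in V(H)$ receive identical 1-WL labels. For RWSE, observe that $C_6$ is bipartite, so the number of closed walks of length $3$ from $u$ is zero and $(\bm D^{-1}\bm A)^3_{uu} = 0$; whereas from any vertex $v$ of a triangle in $2C_3$ the two directed traversals of that triangle produce exactly two closed walks of length $3$, giving $(\bm D^{-1}\bm A)^3_{vv} = 2/2^3 = 1/4$. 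Therefore $\text{RWSE}_\ell$ separates $u$ from $v$ for every $\ell \geq 3$.

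Putting the two witnesses together yields incomparability. The only delicate step is Direction 1, where the all-$\ell$ RWSE indistinguishability of $u_1$ and $u_2$ is inherited as a standing fact from the paper's running example; Direction 2 reduces to elementary closed-walk counting together with the observation that both graphs are $2$-regular. No deeper obstacle arises, and the overall proof is essentially a matter of carefully citing the running example and producing one clean regular-graph pair.
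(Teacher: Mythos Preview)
Your proposal is correct and follows essentially the same approach as the paper: for Direction~1 you cite the running example of Figure~\ref{fig:ex1}(a), and for Direction~2 you use the classic $C_6$ versus two-triangles pair with the length-$3$ return probability computation, exactly as the paper does. The only cosmetic difference is that the paper displays the full $\text{RWSE}_3$ vectors $(0,0.5,0)$ and $(0,0.5,0.25)$ rather than isolating the third coordinate.
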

 
\section{Experiments}
\label{sec:exp}
In this section, we empirically demonstrate the efficacy of \ourpe on several real-world and synthetic datasets across a range of models. Full experimental details and additional results are presented in \Cref{app:exp}. Time and complexity details for computing \ourpe encodings are in \Cref{app:complexity}.

\subsection{Comparing \ourpe to Other Encodings}
\label{exp:pe-comp}

We begin by extending the positional and structural encoding analysis from GPS \citep{gps} to include \ourpe. We evaluate GPS+\ourpe on three different benchmarking datasets, including ZINC \citep{irwin2012zinc, dwivedi2023benchmarking}, PCQM4Mv2 \citep{hu2021ogb}, and CIFAR10 \citep{krizhevsky2009learning, dwivedi2023benchmarking}. ZINC and PCQM4Mv2 are both molecular datasets, whereas CIFAR10 is an image classification dataset. Following the protocol from \citet{gps}, we use a subset of the PCQM4Mv2 dataset.

\textbf{Experimental Setup.}  \; For ZINC, we construct $\text{\ourpe}_{\aG}$ using $\aG=\spasm(C_7) \cup \spasm(C_8)$ in accordance with \cite{DBLP:conf/icml/JinBCL24}. For the PCQM4Mv2-subset and CIFAR10 datasets, we take $\aG$ to be the set of all connected graphs with at most 5 nodes. Details for all other encodings are described by \cite{gps}.

\begin{table*}[t]
    \centering
    \caption{Performance of different positional (PE) and structural (SE) encoding types with the GPS model. \ourpe consistently outperforms other encodings at relatively low computational cost.}
    \label{tab:pe-comp}
    \begin{tabular}{lcccc}
    \toprule
         \multirow{2}{*}{Encoding} & \multirow{2}{*}{PE/SE} & ZINC-12K & PCQM4Mv2-subset & CIFAR10\\
         \cmidrule(r){3-5}
 & & MAE $\downarrow$ & MAE $\downarrow$ & Acc. $\uparrow$\\
         \midrule
         $none$                 & - & 0.113\vartxtm{0.007} & 0.1355\vartxtm{0.0035} & 71.49\vartxtm{0.19} \\
         PEG$^\textit{LapEig}$  & PE  & 0.161\vartxtm{0.006} & 0.1209\vartxtm{0.0003} & 72.10\vartxtm{0.46}\\
         LapPE                  & PE  & 0.116\vartxtm{0.009} & 0.1201\vartxtm{0.0003} & 72.31\vartxtm{0.34}\\
         SignNet$^\mathit{MLP}$ & PE  & 0.090\vartxtm{0.007} & 0.1158\vartxtm{0.0008} & 71.74\vartxtm{0.60}\\
         SignNet$^\mathit{DeepSets}$ & PE & 0.079\vartxtm{0.006} & 0.1144\vartxtm{0.0002} & 72.37\vartxtm{0.34} \\
         RWSE                   & SE & 0.070\vartxtm{0.002} & 0.1159\vartxtm{0.0004} & 71.96\vartxtm{0.40} \\
         \textbf{\ourpe (\it{ours})} & SE & \textbf{0.062\vartxtm{0.002}} & \textbf{0.1133}\vartxtm{0.0014} & \textbf{73.50}\vartxtm{0.44} \\
    \bottomrule
    \end{tabular}
\end{table*} 
\textbf{Results.}\; Our results are presented in \Cref{tab:pe-comp}. The reported baselines for all encodings except \ourpe are taken from \cite{gps}. We see that \ourpe consistently outperforms other encoding types, including a number of spectral methods. When compared to RWSE, a prominent structural encoding, \ourpe achieves over 10\% relative improvement on ZINC. \ourpe's strong performance on the CIFAR10 image classification benchmark highlights its effectiveness beyond molecular datasets. Additional results for \ourpe on the MNIST image classification dataset \citep{deng2012mnist, dwivedi2023benchmarking} and the LRGB Peptides datasets are in \Cref{app:mnist} and \Cref{app:peptides}, respectively.

\subsection{Graph Regression on ZINC}
\label{exp:zinc}
As mentioned above, ZINC is a molecular dataset that presents a graph-level regression task. Following \citet{dwivedi2023benchmarking}, we use a subset of the ZINC dataset that contains 12,000 molecules, and we constrain all of our models to under 500k learnable parameters. We include the use of edge features in \Cref{tab:zinc-main}, referring readers to the appendix (\Cref{tab:zinc-noe}) for results on ZINC without edge features. As before, we construct $\text{\ourpe}_{\aG}$ using $\aG=\spasm(C_7) \cup \spasm(C_8)$ for all ZINC experiments.

{\renewcommand{\arraystretch}{1}%
\setlength{\tabcolsep}{3pt}
\small
\begin{table}[t]
    \centering
    \begin{minipage}{0.55\textwidth}
        \centering
        \caption{We report mean absolute error (MAE) for graph regression on ZINC-12K (with edge features) across several models. \ourpe yields substantial improvements over RWSE for every architecture.}
        \label{tab:zinc-main}
        \begin{tabular}{lccc} 
        \toprule
        Model&\multicolumn{3}{c}{Encoding Type}\\
        \cmidrule(r){2-4}
         &\textit{none}& RWSE&   \textbf{MoSE \it{(ours)}}\\
        \midrule
        MLP-E&0.606\vartxtm{0.002}& 0.361\vartxtm{0.010}&   \bf0.347\vartxtm{0.003}\\ 
        
        GIN-E&0.243\vartxtm{0.006}& 0.122\vartxtm{0.003}&   \bf0.118\vartxtm{0.007}\\
         GIN-E+VN& 0.151\vartxtm{0.006}& 0.085\vartxtm{0.003}& \bf0.068\vartxtm{0.004}\\ 
        
        GT-E&0.195\vartxtm{0.025}& 0.104\vartxtm{0.025}&   \bf0.089\vartxtm{0.018}\\
        
         GPS&0.119\vartxtm{0.011}& 0.069\vartxtm{0.001}&   \bf0.062\vartxtm{0.002}\\
         \bottomrule
        \end{tabular}
    \end{minipage}
    \hfill
    \begin{minipage}{0.42\textwidth}
        \centering
        \caption{We achieve SOTA on ZINC-12K by replacing the random walk encoding from GRIT with \ourpe. MP denotes if the model performs local message passing.}
        \label{tab:zinc-overall}
            \begin{tabular}{lcc}
            \toprule
            Model & MAE $\downarrow$ & MP \\
            \midrule
            GSN             & 0.101\vartxtm{0.010}  & \cmark \\
            CIN             & 0.079\vartxtm{0.006}  & \cmark \\
            GPS             & 0.070\vartxtm{0.002}  & \cmark \\
            Subgraphormer+PE& 0.063\vartxtm{0.001}  & \cmark \\
            GT-E            & 0.195\vartxtm{0.025}  & \xmark \\
            GRIT+RRWP       & 0.059\vartxtm{0.002}  & \xmark \\
            \textbf{GRIT+\ourpe (\it{ours})}    & \bf0.056\vartxtm{0.001} & \xmark \\
            \bottomrule
            \end{tabular}
    \end{minipage}

\end{table}
} 
\textbf{Comparing \ourpe to RWSE across Multiple Architectures.}\;
Because \citet{gps} find that RWSE performs best on molecular datasets, we provide a detailed comparison of \ourpe and RWSE across a range of architectures on ZINC. We select a baseline multi-layer perceptron (MLP-E), a simple self-attention Graph Transformer (GT-E), and GPS \citep{gps} for our models. Note that all models are adapted to account for edge features (see \Cref{app:models} for details). For reference, we include MPNN results from GIN-E and its virtual-node extension GIN-E+VN \citep{OGB-NeurIPS2020}. We follow \citet{gps} and use random-walk length 20 for our RWSE benchmarks on ZINC.

\textbf{Results.}\; \ourpe consistently outperforms RWSE across all models in \Cref{tab:zinc-main}. Even though the graphs in ZINC are relatively small and a random-walk length of 20 already traverses most of the graph, our experiments show that the structural information provided by \ourpe yields superior performance.
We also note that GIN-E+VN becomes competitive with leading Graph Transformer models when using \ourpe. This highlights the effectiveness of \ourpe in supplementing the benefits of virtual nodes, an MPNN-enhancement aimed at capturing long-range node interactions ~\citep{anonymous2025understanding}.

\textbf{Comparing MoSE to RRWP.} \; GRIT \citep{grit} is a recent Graph Transformer architecture that incorporates graph inductive biases without using message passing.  GRIT utilizes relative random walk positional encodings (RRWP) at both the node representation level as well as the node-pair representation level. In our formulation, GRIT+\ourpe, we remove all RRWP encodings for both the node and node-pair representations, and replace them with \ourpe encodings at the node level.

\textbf{Results.}\;
Using \ourpe in conjunction with the GRIT architecture \citep{grit} yields state-of-the-art results on ZINC, as detailed in \Cref{tab:zinc-overall}. Recall that we substituted \ourpe for RRWP only at the node level, completely omitting pairwise encodings. This suggests that even without explicit node-pair encodings, the relevant structural information captured by \ourpe still surpasses that of RRWP. This is especially interesting given that GRIT is intentionally designed to exploit the information provided by RRWP, insofar that \cite{grit} present theoretical results ensuring that GRIT is highly expressive when equipped with RRWP encodings. Despite this, we show that GRIT achieves superior empirical performance when MoSE is used instead of RRWP.

\subsection{Graph Regression on QM9}
\label{exp:qm9}

\begin{table*}[t]
\centering
\caption{We report MAE results for GPS with various feature enhancements on the QM9 dataset. The best model is highlighted in \textcolor{red}{red}, the second best is \textcolor{blue}{blue}, and the third best is \textcolor{olive}{olive}.}
\label{tab:qm9-gps} 
\begin{tabular}{lcccccc}
 \toprule
 Property &   R-GIN&R-GIN+FA & R-SPN  & E-BasePlanE & \multicolumn{2}{c}{GPS} \\
 \cmidrule(r){6-7}
 &   &&  & & RWSE & \textbf{\ourpe \it(ours)} \\
 \midrule
 mu     &   2.64\vartxtm{0.11}&2.54\vartxtm{0.09}&2.21\vartxtm{0.21}     & \textcolor{olive}{1.97\vartxtm{0.03}}& \textcolor{blue}{1.47\vartxtm{0.02}}& \textcolor{red}{1.43\vartxtm{0.02}}\\
 
 alpha  &   4.67\vartxtm{0.52}&2.28\vartxtm{0.04}&1.66\vartxtm{0.06}     & \textcolor{olive}{1.63\vartxtm{0.01}}& \textcolor{blue}{1.52\vartxtm{0.27}}& \textcolor{red}{1.48\vartxtm{0.16}}\\
 
 HOMO   &   1.42\vartxtm{0.01}&1.26\vartxtm{0.02}&\textcolor{olive}{1.20\vartxtm{0.08}}     & \textcolor{blue}{1.15\vartxtm{0.01}}& \textcolor{red}{0.91\vartxtm{0.01}}& \textcolor{red}{0.91\vartxtm{0.01}}\\
 
 LUMO   &   1.50\vartxtm{0.09}&1.34\vartxtm{0.04}&1.20\vartxtm{0.06}     & \textcolor{olive}{1.06\vartxtm{0.02}}& \textcolor{blue}{0.90\vartxtm{0.06}}& \textcolor{red}{0.86\vartxtm{0.01}}\\
 
 gap    &   2.27\vartxtm{0.09}&1.96\vartxtm{0.04}&1.77\vartxtm{0.06}     & \textcolor{olive}{1.73\vartxtm{0.02}}& \textcolor{blue}{1.47\vartxtm{0.02}}& \textcolor{red}{1.45\vartxtm{0.02}}\\
 
 R2     &   15.63\vartxtm{1.40}&12.61\vartxtm{0.37}&10.63\vartxtm{1.01}    & \textcolor{olive}{10.53\vartxtm{0.55}}& \textcolor{red}{6.11\vartxtm{0.16}}& \textcolor{blue}{6.22\vartxtm{0.19}}\\
  
 ZPVE&   12.93\vartxtm{1.81}&5.03\vartxtm{0.36}&\textcolor{blue}{2.58\vartxtm{0.13}} & 2.81\vartxtm{0.16}    & \textcolor{olive}{2.63\vartxtm{0.44}}& \textcolor{red}{2.43\vartxtm{0.27}}\\

 U0     &   5.88\vartxtm{1.01}&2.21\vartxtm{0.12}&\textcolor{olive}{0.89\vartxtm{0.05}} & 0.95\vartxtm{0.04}    & \textcolor{red}{0.83\vartxtm{0.17}}& \textcolor{blue}{0.85\vartxtm{0.08}}\\
 
 U      &   18.71\vartxtm{23.36}&2.32\vartxtm{0.18}&\textcolor{olive}{0.93\vartxtm{0.03}}& 0.94\vartxtm{0.04}    & \textcolor{blue}{0.83\vartxtm{0.15}}& \textcolor{red}{0.75\vartxtm{0.03}}\\
 
 H      &   5.62\vartxtm{0.81}&2.26\vartxtm{0.19}&\textcolor{olive}{0.92\vartxtm{0.03}}& \textcolor{olive}{0.92\vartxtm{0.04}}& \textcolor{blue}{0.86\vartxtm{0.15}}& \textcolor{red}{0.83\vartxtm{0.09}}\\
 
 G      &   5.38\vartxtm{0.75}&2.04\vartxtm{0.24}&\textcolor{blue}{0.83\vartxtm{0.05}}& \textcolor{olive}{0.88\vartxtm{0.04}}   & \textcolor{blue}{0.83\vartxtm{0.12}}& \textcolor{red}{0.80\vartxtm{0.14}}\\
 
 Cv     &   3.53\vartxtm{0.37}&1.86\vartxtm{0.03}&\textcolor{olive}{1.23\vartxtm{0.06}}& \textcolor{blue}{1.20\vartxtm{0.06}}& 1.25\vartxtm{0.05}& \textcolor{red}{1.02\vartxtm{0.04}}\\
 
 Omega  &   1.05\vartxtm{0.11}&0.80\vartxtm{0.04}&0.52\vartxtm{0.02}     & \textcolor{olive}{0.45\vartxtm{0.01}}& \textcolor{blue}{0.39\vartxtm{0.02}}& \textcolor{red}{0.38\vartxtm{0.01}}\\
\bottomrule
\end{tabular} 
\end{table*} 
QM9 is a real-world molecular dataset that contains over 130,000 graphs \citep{wu2018moleculenet,Brockschmidt20}. The node features include atom type and other descriptive features. Unlike ZINC, where there is only one regression target, QM9 presents 13 different quantum chemical properties to regress over, making it a much more robust benchmark.

\textbf{Experimental Setup.} \; We select GPS \citep{gps}  due to its modularity and compare GPS+RWSE to GPS+\ourpe. We follow \citeauthor{DBLP:conf/icml/JinBCL24} in constructing $\text{\ourpe}_{\aG}$ with the set of all connected graphs of at most 5 vertices and the 6-cycle: $\aG=\Omega_{\leq5}^{con} \cup C_6$. Since there are several regression targets for QM9, our choice of $\aG$ is advantageous as it accounts for a diverse set of motifs, allowing GPS to express varied graph motif parameters (Proposition~\ref{prop:motif}). We keep a random-walk length of 20 for RWSE to align with \citet{gps}. Finally, we provide several leading GNN models for comparison \citep{Brockschmidt20, Alon-ICLR21, AbboudDC22, Dimitrov2023}.

\textbf{Results.} \;\ourpe consistently outperforms RWSE across multiple targets, achieving the best MAE on 11 of the 13 molecular properties (\Cref{tab:qm9-gps}). GPS+\ourpe comfortably outperforms leading GNN models, including R-SPN~\citep{AbboudDC22} and E-BasePlane~\citep{Dimitrov2023}, the latter of which is isomorphism-complete over planar graphs. By contrast, GPS+RWSE is occasionally outperformed by these GNN architectures, despite being more computationally demanding.

\subsection{Synthetic Dataset: Predicting Fractional Domination Number}
\label{exp:synth}

We generate a new synthetic dataset where the task is to predict each graph's \textit{fractional domination number}. This target relies on complex long-range interactions, being an inherently global property, unlike other popular metrics (e.g., clustering coefficients) which are aggregations of local properties. The target distribution of our dataset is complex (\Cref{fig:fracdom-dist}), making the task highly challenging. Furthermore, fractional domination is important for a range of applications, such as optimizing resource allocation~\citep{FDNapps}. Specific details for our dataset are given in \Cref{app:synth}.

\textbf{Experimental Setup.} \;
We focus on isolating the performance of \ourpe compared to LapPE and RWSE. For this, we select a 4-layer MLP and a basic Graph Transformer \citep{dwivedi2020generalization} as our reference models. Since neither architecture contains a local message passing component, neither model contains any inherent graph inductive bias. Hence, all structure-awareness comes from the encodings. We choose $\aG=\spasm(C_7) \cup \spasm(C_8)$ for $\text{\ourpe}_{\aG}$, random walk length 20 for RWSE, and dimension 8 for LapPE.

\begin{figure}
{\centering
\begin{minipage}{.48\textwidth}
  \centering
  \includegraphics[width=.9\linewidth]{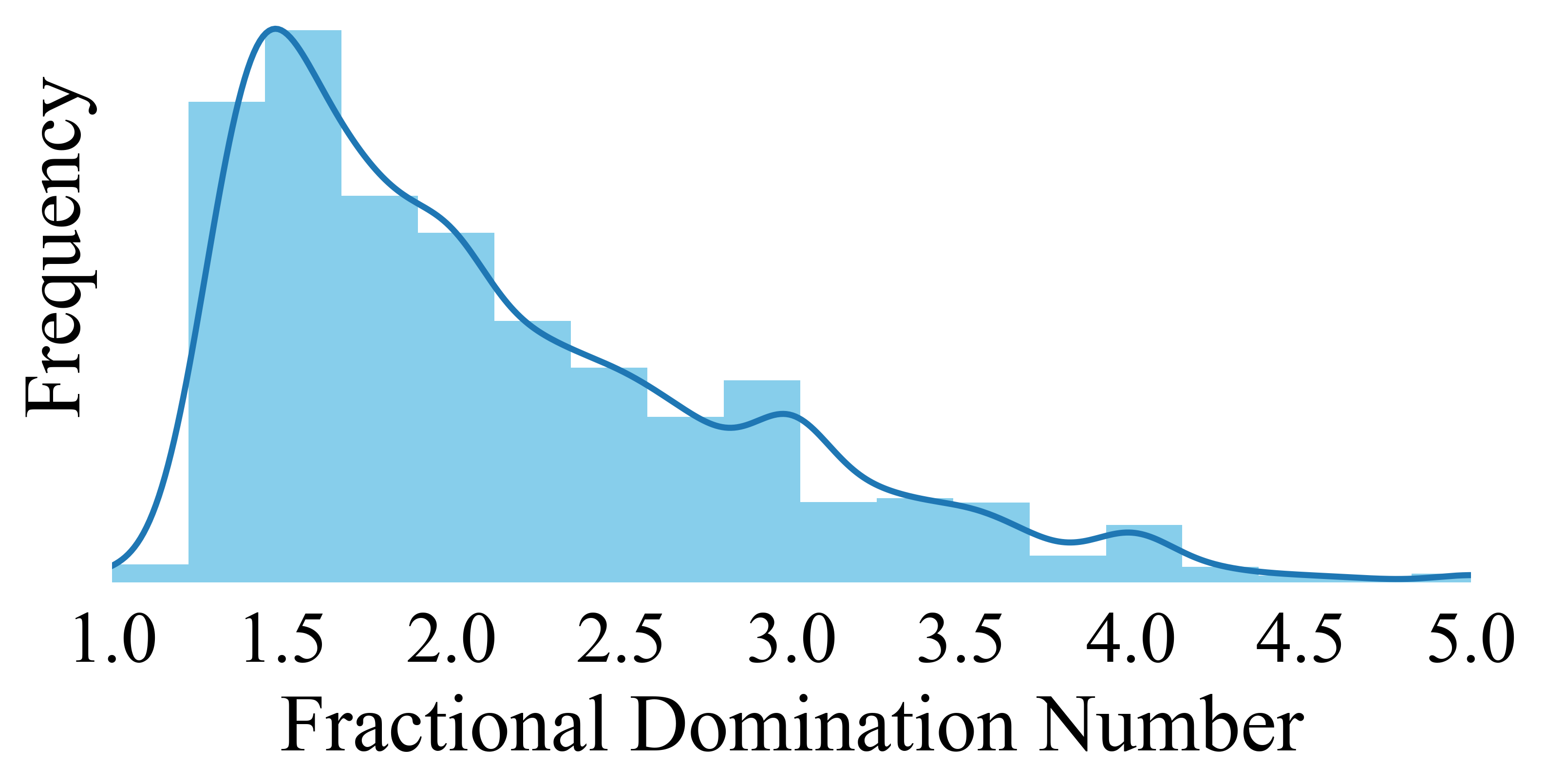}
  \captionof{figure}{Plot of the distribution of fractional domination numbers in our synthetic dataset.}
  \label{fig:fracdom-dist}
\end{minipage}%
    \hfill
\begin{minipage}{.48\textwidth}
    \centering
    \captionof{table}{MAE results for our synthetic dataset. \ourpe consistently performs the best, with MLP+\ourpe even outperforming the Graph Transformer with LapPE and RWSE.}
    \label{tab:synth-table}
    \begin{tabular}{lcc}
    \toprule
         Encoding & MLP & GT\\
         \midrule
         LapPE    & 0.482\vartxtm{.003} & 0.084\vartxtm{.001}\\
         RWSE     & 0.075\vartxtm{.001}  & 0.061\vartxtm{.001}\\
         \textbf{\ourpe \it{(ours)}}   & \textbf{0.058}\vartxtm{.001}  & \textbf{0.055}\vartxtm{.001} \\
    \bottomrule
    \end{tabular}
\end{minipage}
}
\vspace{-1em}
\end{figure} 
\textbf{Results.}\;
\Cref{tab:synth-table} shows that \ourpe outperforms LapPE and RWSE across both models. Strikingly, even the simple MLP+\ourpe model outperforms the more advanced GT+RWSE and GT+LapPE models. These results suggest that \ourpe is able to capture complex graph information in a context broader than just the molecule and image domains. We also note that MLP+LapPE performs significantly worse than all other configurations. This indicates that the self-attention mechanism of GT is a key component in allowing the model to leverage spectral information provided by LapPE.

\section{Discussion and Future Work}
\label{sec:future}
We propose motif structural encoding as a flexible and powerful graph inductive bias based on counting graph homomorphisms. \ourpe is supported by expressiveness guarantees that translate naturally to architectures using \ourpe. In particular, we show that \ourpe is more expressive than RWSE, and we relate these encoding schemes to the expressive power of MPNNs as well as the WL hierarchy more generally.
We empirically validate the practical effects of our theoretical results, achieving strong benchmark performance across a variety of real-world and synthetic datasets. %
Although our work takes a closer look at the expressivity of different graph inductive biases, there remains several open questions (e.g., the relationship between MoSE and spectral methods). Furthermore, we have yet to explore the full potential of the node-weighting parameter in our encoding scheme.

\subsubsection*{Acknowledgments}
Emily Jin is partially funded by AstraZeneca and the UKRI Engineering and Physical Sciences Research Council (EPSRC) with grant code EP/S024093/1. Matthias Lanzinger acknowledges support by the Vienna Science and Technology Fund (WWTF) [10.47379/ICT2201]. This research is partially supported by EPSRC Turing AI World-Leading Research Fellowship No. EP/X040062/1 and EPSRC AI Hub on Mathematical Foundations of Intelligence: An ``Erlangen Programme" for AI No. EP/Y028872/1.

\bibliography{main}
\bibliographystyle{iclr2025_conference}

\appendix

\section{Technical Details}
\label{app:theory}

\begin{proof}[Proof of Proposition \ref{prop:tw}]
    \cite{Dvorak10} showed that if $G$ and $H$ are equivalent under $k$-WL, then every graph $F$ with treewidth at most $k$ has $\homs(F,G) = \homs(F,H)$.
    This extends also to the vertex level. Let $v \in V(G)$ and $v' \in V(H)$ such that $v$ and $v'$ have equivalent $k$-WL labels. Then also $\homs(F,G,v) = \homs(F,H,v')$ (see \cite{LanzingerPB24}, Lemma~12). 
   Hence, if the maximum treewidth in $\aG$ is $k$, then equivalence under $k$-WL implies also that they are equivalent under $\text{\ourpe}_\aG$.
\end{proof}

\begin{proof}[Proof of Proposition~\ref{prop:singleton.nowl}]
    It is well known \citep{lovasz1967operations} that for every two non-isomorphic graphs $G$ and $H$, there is a graph $F$  such that $\homs(F,G) \neq \homs(F,H)$.
    We have that $\text{\ourpe}_{\{F\}}$ will distinguish the two graphs.
\end{proof}

\begin{proof}[Proof of Proposition \ref{prop:motif}]
    Suppose the opposite, i.e., there is a pair of graphs $G,H$ such that $f(G)\neq f(H)$, but $\text{\ourpe}_{\Gamma_f}$, where $\Gamma_f$ is set of graphs in the basis of $f$, creates the same multiset of labels on both graphs. Then specifically, also $\homs(F,G) = \homs(F,H)$ for every $F \in \Gamma_f$.  But since $f$ is a graph motif parameter, $f(G) = f(H)$ and we arrive at a contradiction.
\end{proof}

\subsection{Theorem \ref{thm:main}}

\begin{proof}[Proof of Proposition~\ref{prop:mose_determines_rwse}] %

Take any graph $H$ and notate $V(H)=\{1,2,...,n\}$. For $v,v'\in V(H)$, define ${P}(v\xrightarrow{i}v')$ to be the probability of starting a length $i$ random walk at node $v$ and ending it at node $v'$. Here, ``length $i$" refers to the number of edges in the random walk where we allow repeat edges, and ``random walk" refers to a walk where each step (say we are currently at node $v$) assigns a uniform probability $1/d(v)$ to moving to any of the neighbors in $\mathcal{N}(v)$. 

As shorthand, define $\bm M=\bm D^{-1}\bm A$ where $\bm D$ and $\bm A$ are the diagonal degree matrix and adjacency matrix of $H$ respectively. Let us first prove that $(\bm M^i)_{v,v'}={P}(v\xrightarrow{i}v')$ for any nodes $v,v'\in V$ and for all powers $i\in\mathbb{N}$ by performing induction on $i$. The base case $i=1$ holds trivially because we assume that random steps are taken uniformly over neighbors. Assume as the inductive hypothesis that our claim holds for some $i\geq 1$, and note that:
$$(\bm M^{i+1})_{v,v'}=(\bm M^1\bm M^i)_{v,v'}=\sum_{x=1}^n\bm M_{v,x}\cdot \bm M^i_{x,v'}$$
but our base case and inductive hypothesis tell us that
$$\sum_{x=1}^n\bm M_{v,x}\cdot \bm M^i_{x,v'}=\sum_{x=1}^n{P}(v\xrightarrow{1} x)\cdot{P}(x\xrightarrow{i}v')$$
By the law of total probability:
$$\sum_{x=1}^n{P}(v\xrightarrow{1} x)\cdot{P}(x\xrightarrow{i}v')={P}(v\xrightarrow{i+1}v')$$
This completes the induction. Hence, for any node $v$ in any graph $H$, the node-level $\text{RWSE}_k$ vector is equivalent to:
$$\text{RWSE}_k(v,H)=\bigg[(\bm M^i)_{v,v}\bigg]_{i=1}^k=\bigg[{P}(v\xrightarrow{i}v)\bigg]_{i=1}^k\in\mathbb{R}^k$$
Now, let us show that we can recover this random-walk vector using \ourpe. Consider the family of all cycles with up to $k$ nodes $\mathcal{G}=\{C_i\}_{i=1}^k$, and let $\omega$ be the vertex weighting which maps any node to the reciprocal of its degree. Of course, the ``cycles" $C_1$ and $C_2$ are not well-defined, so let us set $C_1$ to be the empty graph (with no nodes and no edges) and $C_2$ to be the graph with two nodes connected by an edge. We say that there are zero homomorphisms from $C_1$ into any other graph purely as a notational convenience aligning with the fact that $(\bm M^1)_{v,v}$ is always 0. 

For each $C_i\in \aG$ with $i>1$, enumerate the nodes $V(C_i)=\{u_0,u_1,...,u_{i-1}\}$ such that $(u_{\ell},u_{\ell+1})\in E(C_i)$ for all $\ell=0,...,i-2$ and $(u_{i-1},u_{0})\in E(C_i)$. Take any node $v$ in any graph $H$, and define $\mathcal{H}_i$ to be the set of all homomorphisms from $C_i$ to $G$ which map node $u_0\in C_i$ to node $v\in G$. Each homomorphism in $f\in\mathcal{H}_i$ can be constructed by making $i-1$ choices of node image $f(u_\ell)\in\mathcal{N}\left(f(u_{\ell-1})\right)$ for $\ell=1,...,i-1$ such that $f(u_{(i-1)})\in\mathcal{N}(f(u_0))$ where we have fixed $f(u_0)=v$. In other words, each homomorphism $f\in\mathcal{H}_i$ corresponds to a $i$-edge walk (allowing edge repeats) starting and ending at node $v$, which we will call $W_f$. This correspondence is described by taking the induced subgraph $W_f=G[f(C_i)]$ where $f(C_i)$ is the image of $C_i$ under homomorphism $f$. In the reverse direction, any $i$-edge walk $$W_f:\quad v=v^{(0)},v^{(1)},....,v^{(i-1)},v^{(i)}=v\quad\text{in }H$$ that starts and ends at node $v$ corresponds to a homomorphism $f\in\mathcal{H}_i$ which maps $f(u_\ell)=v^{(\ell)}$. Hence, we have described a bijective correspondence between $\mathcal{H}_i$ and the set of $i$-edge walks starting/ending at $v$. 

For any $f\in\mathcal{H}_i$, we have:
$$\prod_{v\in V(C_i)}\omega(f(v))=\prod_{v\in W_f}1/d(v)=\prod_{\ell=0}^{i-1}1/d(v^{(\ell)})$$
Since the probability of stepping from node $v^{(\ell)}\in V(W_f)$ to node $v^{(\ell+1)}\in V(W_f)$ in a random walk is simply $1/d(v^{(\ell)})$, it holds that:
$$\prod_{\ell=0}^{i-1}1/d(v^{(\ell)})=P(W_f)$$
where $P(W_f)$ is the probability of performing the random walk $W_f$. Thus,
$$\omega\text{-Hom}_{u_0\shortto v}(C_i,G)=\sum_{f\in\mathcal{H}_i}P(W_f)=P(v\xrightarrow{i}v)$$
giving us:
$$\text{MoSE}_{\mathcal{G},\omega}(v,H) = \bigg[\omega\text{-Hom}_{ \shortto v}(C_i,H)\bigg]_{i=1}^d=\bigg[{P}(v\xrightarrow{i}v)\bigg]_{i=1}^k=\text{RWSE}_k(v,H)$$
\end{proof}

\begin{proof}[Proof of Proposition~\ref{prop:2wl}]
    We first recall the walk refinement procedure from  \cite{lichter2019walk} such that we can relate it directly to RWSE.

    The scheme at its basis assumes a directed complete colored graph $G=(V,H,\chi)$ where the coloring $\chi$ always assigns different colors to self-loops than to other edges. For tuples of $m$ vertices $(v_1,v_2,\dots,v_m)\in V^m$, we define   
    \[
    \bar{\chi}(v_1,v_2,\dots,v_m) = ((\chi(v_1,v_2), \chi(v_2,v_3), \dots, \chi(v_{m-1}, v_m)).
    \]

    Ultimately, we want to use the scheme on undirected uncolored graphs. An undirected (and uncolored) graph $G$, is converted into the setting above by adding the coloring $\chi : V(G) \to \{-1,0,1\}$ as follows: for all self-loops $\chi$ assigns $-1$, i.e., $\chi(v,v) = -1$. For all distinct pairs $v,u$, we set $\chi(v,u) = 1$ if $(v,u)\in E(G)$ and $\chi(v,u)=0$ if $(v,u)\not\in E(G)$. For the directed edges recall that we consider  complete directed graphs and hence $E=V^2$.

    With this in hand we can define the walk refinement procedure of \cite{lichter2019walk}. Formally, for $k\geq 2$, the $k$-walk refinement of a colored complete directed graph $G=(V,E,\chi)$ as the function that gives the new coloring $\chi_{W[k]}$ to every edge is as follows
    \[
        \chi_{W[k]}(v,u) = \mbrak{ \bar{\chi}(v, w_1,\dots,w_{k-1},u) \mid w_i \in V}
    \]
    Intuitively, the refinement takes into account all walks of length $k$ in the graph. Note that walks of length shorter than $k$ are also captured via self-loops (because of their different weights we are also always aware that a self-loop is taken and that a specific tuple in the multi-set represents a shorter walk.
    As with the Weisfeiler-Leman color refinement procedure, $k$-walk refinement can be iterated until it reaches a stable refinement (after finitely many steps). We will write $\chi^i(v,u)$ to designate the color obtained after $i$ steps of the refinement (with $\chi(v,u) = \chi^0(v,u)$). Importantly, \cite{lichter2019walk} (Lemma 4) show that the stable refinements of the $k$-walk refinement procedure is always reached after finitely many steps and produces the same partitioning of vertices as 2-WL. That is, any pairs $(u,v)$ and $(x,y)$ that receive the same label by 2-WL refinement, also have $\chi^\infty(v,u) = \chi^\infty(x,y)$.

    All that is then left is to show that the stable $W[k]$ refinement uniquely determines the RWSE$_k$ feature vector for every vertex $v$. In particular, we show that every entry of the RWSE$_k$ vector of any vertex $v$ can be uniquely derived from the $W[k]$ labeling of the pair $(v,v)$. As shown in Proposition~\ref{prop:mose_determines_rwse}, the $k'$-th entry of the RWSE$_k$ vector for $v$, is uniquely determined by the number of $k'$-hop paths beginning and ending in $v$, together with the degrees along every such path. In the following, we observe that this combination of information can be directly computed from the $W[k]$ labeling of $(v,v)$. 

    We first observe that $\chi^1_{W[k]}(v,v)$ uniquely determines the degree of $v$. It is equivalent to the number of $k$-walks that move to an adjacent vertex and continue with self-loops for the $k-1$ remaining steps. That is:
    \begin{equation}
        \mathsf{degree}(v) = \mathsf{degree}(\chi^1_{W[k]}(v,v)) := | \{ (1,-1,-1,\dots,-1) \in \chi^1_{W[k]}(v,v) \}|
        \label{eq:deg}
    \end{equation}

    Similarly, it is straightforward to recognise the original label $\chi(v,u)$ from $\chi^1_{W[k]}(v,u)$:
    \begin{equation}
            \chi(v,u) = -1 \iff \{-1\}^k \in \chi^1_{W[k]}(v,u)
\label{eq:selfloop}
    \end{equation}
    that is $v=u$ if and only if $u$ can be reached from $v$ only through self-loops. Similarly we can determine the other labels, and we only state the case for an edge existing here:
    \begin{equation}
          \chi(v,u) = 1 \iff (1,-1,-1,\dots,-1) \in \chi^1_{W[k]}(v,u).
          \label{eq:edge}
    \end{equation}

    We can similarly retrieve the label $\chi(v,u)$ from $\chi^2_{W[k]}(v,u)$, since in any tuple of the form
    $$((\chi(v_1,v_2), \chi(v_2,v_3), \dots, \chi(v_{m-1}, v_m))
    $$
    we know by \Cref{eq:selfloop} and \Cref{eq:edge} whether the respective pair of vertices are adjacent or equal. We can then naturally iterate the definitions from \Cref{eq:selfloop} and \Cref{eq:edge}.
    Thus, for every $1\leq k'\leq k$, we can directly enumerate all paths from $v$ to $v$ by inspecting the multiset $\chi^2(v,v)$.
    
    To complete the argument outlined above, we need to show that for every $v,u$, $\chi^2(v,u)$ determines the degree of $v$. Now as above we can determine $\chi(v,u)$ and for each case observe how to obtain the degree. We have $\chi(v,u) = -1$, if and only if there is a tuple for the walk $(v,v,\dots,v)\in \chi^2(v,u)$ (and the tuple can be recognised). The degree is then directly determined by the first tuple through \Cref{eq:deg}. Similarly, $\chi(v,u) = 1$, if and only if there is a tuple $(v,u,\dots,u)\in \chi^2(v,u)$ that corresponds to original labels $(-1,-1,\dots,-1,1)$. We can thus again obtain the degree from the first position of this tuple, and analogously for the $0$ label.

    In summary, we see that after 2 steps of $k$-walk refinement, every label of $(v,v)$ has enough information to uniquely determine all $1$ to $k$-walks from $v$ to $v$, together with the degree of every node of the walk. As seen in the proof of Proposition~\ref{prop:mose_determines_rwse}, this determines the RWSE$_k$ vector.
\end{proof}

We note that the proof in fact shows a stronger statement than RWSE$_k$ being at most as expressive as $2$-WL. It shows that RWSE$_k$ is at most as expressive as $2$-steps of $k$-walk refinement. This can be further related to equivalence under a certain number of 2-WL steps through the work of \cite{lichter2019walk}.

\begin{proof}[Proof of Proposition~\ref{prop:rwse-1wl-incomp}]
    One direction is shown in \Cref{fig:ex1}, which shows two graphs $G$ and $H$, with highlighted green and black nodes each. For every step length $ \ell$, RWSE produces the same node features for both green vertices ($u_1, u_2$) and both black vertices ($v_1, v_2$), respectively. At the same time, it is straightforward to see that the 1-WL labeling of the green/black nodes is different in the two graphs.
    
    For the other direction, we use the classic example comparing the graph $G$ consisting of two triangles and the graph $H$ containing the 6-vertex cycle as drawn below (\Cref{fig:1wlex}). It is well known that all nodes in both graphs have the same 1-WL label. In contrast, RWSE$_3$ labels the nodes in $G$ differently from the nodes in $H$, and the respective vectors are given below.
    \begin{figure}[h]
        \centering
        \begin{tikzpicture}[scale=1, line width=.40mm, every node/.style={circle, draw, minimum size=10pt, inner sep=2pt}]

\node[fill=blue] (A) at (0,0) {};
\node[fill=blue] (B) at (1,0) {};
\node[fill=blue] (C) at (1.5,0.87) {};
\node[fill=blue] (D) at (1,1.73) {};
\node[fill=blue] (E) at (0,1.73) {};
\node[fill=blue] (F) at (-0.5,0.87) {};

\draw (A) -- (B) -- (C) -- (D) -- (E) -- (F) -- (A);

\node[draw=none] at (2.5, 0.5) {\color{blue}$\begin{pmatrix} 0 \\ 0.5 \\ 0 \end{pmatrix}$};

\end{tikzpicture}
\hspace{2cm}
        \begin{tikzpicture}[scale=0.8, line width=.4mm, every node/.style={circle, draw, minimum size=10pt, inner sep=2pt}]
\node[fill=losing!70] (X) at (3,0) {};
\node[fill=losing!70] (Y) at (3.5,0.87) {};
\node[fill=losing!70] (Z) at (2.5,0.87) {};

\draw (X) -- (Y) -- (Z) -- (X);

\node[fill=losing!70] (P) at (3, 2.5) {};
\node[fill=losing!70] (Q) at (3.5, 1.63) {};
\node[fill=losing!70] (R) at (2.5, 1.63) {};

\draw (P) -- (Q) -- (R) -- (P);

\node[draw=none] at (5, 0.5) {\color{losing!70}$\begin{pmatrix} 0 \\ 0.5 \\ 0.25 \end{pmatrix}$};

\end{tikzpicture}
        \caption{The 6 vertex cycle (left) and the disjoint sum of 2 triangles (right) are indistinguishable by 1-WL, but distinguishable by RWSE already after 3 steps. The RWSE vectors in the respective graphs are all the same and are given next to the graphs.}
        \label{fig:1wlex}
    \end{figure}
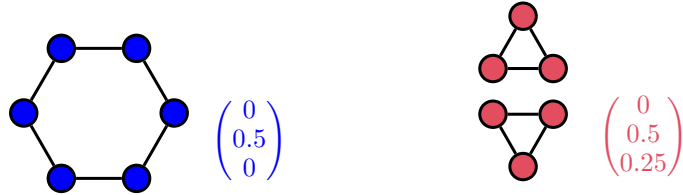
\end{proof}

\section{Additional Experimental Details}
\label{app:exp}
We provide additional experimental details and hyperparameters for our results in \Cref{sec:exp}. All code and instructions on how to reproduce our results are available at the following link: \href{https://github.com/linusbao/MoSE}{https://github.com/linusbao/MoSE}.

\paragraph{Compute Resources.} 
All experiments were conducted on a cluster with 12 NVIDIA A10 GPUs (24 GB) and 4 NVIDIA H100s. Each node had 64 cores of Intel(R) Xeon(R) Gold 6326 CPU at 2.90GHz and ~500GB of RAM. All experiments used at most 1 GPU at a time.

\subsection{Complexity of Computing Homomorphism Counts for \ourpe}
\label{app:complexity}

\begin{table*}[t]
\caption{Time to compute \ourpe embeddings used in all experiments reported as elapsed real time (wall time). Reported times are normalized by dataset size (i.e. per graph). $\Omega_{\leq5}^{con}$ refers to the set of all connected graphs with at most 5 nodes, and $C_i$ is the cycle graph on $i$ nodes.}
\label{tab:counting-time}
\begin{center}
\begin{tabular}{lccccc}
\toprule

\multirow{2}{*}{Dataset}& \multirow{2}{*}{\# Graphs}& Avg. \# & Avg. \# & \multirow{2}{*}{Motifs} & \multirow{2}{*}{Time (sec)} \\
& & nodes & edges & & \\
\midrule
ZINC   & 12,000  & 23.2 & 49.8 & $\spasm(C_7) \cup \spasm(C_8)$ & 0.03 \\
QM9    & 130,831 & 18.0 & 37.3 & $\Omega_{\leq5}^{con} \cup C_6$ & 0.02 \\
PCQM4Mv2-subset  & 446,405  & 14.1  & 14.6 & $\Omega_{\leq5}^{con} \cup C_6$ & 0.03 \\
CIFAR10& 60,000  & 117.6& 941.1&  $\Omega_{\leq5}^{con}$ & 0.08 \\
MNIST  & 70,000  & 70.6 & 564.5& $\Omega_{\leq5}^{con}$ & 0.06\\
Synthetic & 10,000  & 23.5 & 137.9 &  $\spasm(C_7) \cup \spasm(C_8)$ & 0.06\\
Peptides-func/struct& 15,535    & 150.9 & 307.3 & Both* & 0.01 \\
\bottomrule
\multicolumn{6}{l}{\small *The ``Both" motif refers to $\spasm(C_7) \cup \spasm(C_8)$ and $\Omega_{\leq5}^{con} \cup C_6$, benchmarked seperately.} \\
\end{tabular}
\end{center}
\end{table*}

Practically, computing the homomorphism counts for \ourpe is highly parallelizable since the homomorphisms from each pattern can be counted independently from one another. In our experiments, computing the \ourpe encodings took only a fraction of a second per graph (see \Cref{tab:counting-time}). By contrast, training various Transformer models on these datasets took between 15-30 hours for a single run. Hence, the time to compute \ourpe is almost negligible overall.

From a theoretical perspective, the complexity of \ourpe is very intricate. With constant uniform weighting, counting homomorphisms from $H$ to $G$ is feasible in time $O(|G|^{tw(H)})$ where $tw(H)$ is the treewidth of $H$. It is known that this exponent cannot be substantially improved upon \citep{Marx10}. The complexity of computing \ourpe encodings thus fundamentally depends on the choice of patterns and their treewidth.

As we show in Proposition \ref{prop:mose_determines_rwse}, RWSE is simply a limited special case of \ourpe for one specific set of pattern graphs (cycles) and one specific weighting scheme ($\omega(v) = 1/d(v)$). Cycles always have treewidth 2, and thus RWSE is always feasible in quadratic time. However, RWSE cannot capture other treewidth 2 motifs that \ourpe can accommodate for essentially the same cost.

\subsection{Model Definitions}\label{app:models}
We give definitions for those architectures which are not taken directly from the literature. Given a graph $G$ and a node $v\in V(G)$, let the node representation $\bm h^{(0)}_v\in\mathbb{R}^d$ be a vector concatenation of the initial node label and the structural/positional encoding of $v$. To apply an $L$-layer MLP on $G$, we update each node representation in parallel for $t=1,...,L$:
\begin{equation}\label{eq:MLP}
   \bm h_v^{(t)}=\text{ReLU}\bigg(\bm W^{(t)}\bm h_v^{(t-1)}+\bm b^{(t)}\bigg),\qquad\forall\text{ }v\in V(G) 
\end{equation}
When edge features are available, MLP-E applies two MLPs in parallel to process node representations and edge representations separately. That is, node representations $\bm h_v^{(t)}$ are learned according to equation (\ref{eq:MLP}) using a set of weights $\{\bm W_{\text{node}}^{(t)},\bm b^{(t)}_{\text{node}}\}_{t=1}^L$, while the edge representations $\bm e_{u,v}^{(t)}\in\mathbb{R}^{d_e}$ are learned using a different set of weights:
$$\bm e_{u,v}^{(t)}=\text{ReLU}\bigg(\bm W_{\text{edge}}^{(t)}\bm e_{u,v}^{(t-1)}+\bm b_{\text{edge}}^{(t)}\bigg),\qquad\forall\text{ }(u,v)\in E(G)$$
for $t=1,...,L_e$. Note that we \emph{do not} require that $d=d_e$ and $L=L_e$.

The GT model updates node representations by using scaled dot-product attention \citep{vaswani2017attention}:
    \begin{align}\label{eq:attn}
    \bm h_v^{(t)}&=\text{MLP}^{(t)}\bigg(\bm W^{(t)}_O\bigoplus_{h=1}^{n_H}\bm W^{(t)}_{V,h}\sum_{u\in V(G)}\alpha^{(t)}_{v,u,h}\bm h_u^{(t-1)}\bigg)\\
    \alpha^{(t)}_{v,u,h}&=\text{softmax}_{u}\left(\frac{\bm W^{(t)}_{Q,h}\bm h_v^{(t-1)}\cdot \bm W^{(t)}_{K,h}\bm h_u^{(t-1)}}{\sqrt{d}}\right)
    \end{align}
    for $t=1,...,L$. Here, the $\oplus$ symbol denotes vector concatenation, and softmax is computed relative to the vector formed by iterating the expression inside the softmax argument over $u\in V(G)$. When we are provided with edge labels that come from a finite collection of discrete edge types, the GT-E models uses a learnable look-up embedding to learn a representation $\bm e_{u,v,h}^{(t)}\in\mathbb{R}^d$ for every node pair $u\neq v\in V(G)$, every attention head $h=1,...,n_H$, and every layer $t=1,...,L$. Non-adjacent node pairs $(u,v)\notin E(G)$ are assigned some pre-defined ``null edge-type". Then, these edge representations bias the attention as:
\begin{equation}\label{eq:bias_attn}
\alpha^{(t)}_{v,u,h}=\text{softmax}_{u}\left(\frac{\text{dot}\big[\bm W^{(t)}_{Q,h}\bm h_v^{(t-1)},\bm W^{(t)}_{K,h}\bm h_u^{(t-1)},\bm e_{u,v,h}^{(t)}\big]}{\sqrt{d}}\right)
\end{equation}
where
$$\text{dot}[\bm a,\bm b,\bm c]=\sum_i\bm a_i\cdot\bm b_i\cdot\bm c_i$$
The node update equation of GT-E is defined by simply plugging in the edge-biased attention values $\alpha^{(t)}_{v,u,h}$ given by equation (\ref{eq:bias_attn}) into the node update equation (\ref{eq:attn}). Our method of ``biasing" attention values with edge representations (eq. \ref{eq:bias_attn}) is inspired by the edge-type-aware graph transformer models of \cite{dwivedi2020generalization} and \cite{Kreuzer-Neurips'21}.  

The final node representations are sum pooled in order to produce a graph representation $\bm h_{G}$. The MLP-E model additionally pools edge representations and concatenates the edge pool to the node pool:
\begin{align*}
    \bm h_{G}&=\sum_{v\in V(G)}\bm h_v^{(L)}\qquad\text{for MLP, GT, and GT-E}\\
    \bm h_{G}&=\left(\sum_{v\in V(G)}\bm h_v^{(L)}\right)\oplus\left(\sum_{(u,v)\in E(G)}\bm e_{u,v}^{(L_e)}\right)\qquad\text{for MLP-E}
\end{align*}
Finally, an MLP prediction head is applied on $\bm h_{G}$ to produce an output label. The reader should note that equations (\ref{eq:MLP})-(\ref{eq:bias_attn}) only describe the essential aspects of our models; when we implement these models in practice, we additionally include the usual regularization/auxiliary modules at each layer (e.g., batch normalization and dropout), as specified by the hyperparameter details below. 

\subsection{Hyperparameter Protocol}\label{app:hp proto}
For all experiments which compare multiple positional/structural encodings, we perform the same grid-search on each encoding independently (however, this does not necessarily apply to reference results that we draw from other works). For some of our experiments, we take the shortcut of grid searching a wider range of values on the baseline \textit{none}-encoding (where we use the model \emph{without} any positional/structural encoding), and then we proceed to search only a strict subset of (high-performing) values for each non-empty encoding (e.g., \ourpe, RWSE, and LapPE). In this case, since the grid search for RWSE/\ourpe/LapPE is strictly less exploratory than our baseline search, our reported results for RWSE/\ourpe/LapPE are (at worst) a conservative \textit{under-estimate} of the improvement they provide over the \textit{none}-encoding baseline, or (at best) a perfectly fair comparison. 

See the appendix sections that follow for details on our hyperparameter configurations and hyperparameter searches. We try to give as many relevant specifics as possible here in the appendix, but some of the hyperparameter details (especially those for our large Transformer models) are excessive for the purposes of this appendix, and are much better expressed in a model configuration file. Hence, we only outline the most important details here, asking readers to reference the online code repository (\href{https://github.com/linusbao/MoSE}{https://github.com/linusbao/MoSE}) for all other information. 

\textbf{Blanket Hyperparameters.} \; There are number of hyperparameters which remain fixed across \emph{all} of our experiments, so we state them here once to avoid redundancy. In all models, a 2-layer MLP is used to embed the raw positional/structural encoding values. The dimension of this embedding scales according to the width of the main model being used (see the code repository for exact values). Unless otherwise specified, all models utilize global sum-pooling followed by a 3-layer MLP prediction head where each layer subsequently reduces the hidden dimension by a factor of 1/2 (other than the output layer, which has a dimension equal to the number of targets). Lastly, all models use the ReLU activation function.

\subsection{Experimental Details for Encoding Comparison Analysis}\label{app:enc-comp-xps}
We give experimental details for the \ourpe results shown in \Cref{tab:pe-comp}. All of our \ourpe results in \Cref{tab:pe-comp} are the mean ($\pm$ standard deviation) of 4 runs with different random seeds. \Cref{tab:pe-comp} results for encodings other than \ourpe are taken from \cite{gps}.

\subsubsection{ZINC-12K}
\label{app:pe-comp-zinc}
We use a subset of the ZINC molecular dataset that contains 12,000 graphs \citep{dwivedi2023benchmarking}. The dataset is split into 10,000 graphs for training, 1,000 graphs for validation, and 1,000 graphs for testing. Each graph in the dataset represents a molecule, where the node features indicate the atom type, and the edge features indicate the bond type between two atoms. The dataset presents a graph-level regression task to predict the constrained solubility of a given molecule. Following \cite{gps}, we constrain all models to a 500K parameter budget.

\paragraph{GPS} Hyperparameter details for GPS+\ourpe results on ZINC in \Cref{tab:pe-comp} are identical to those used in \Cref{app:zinc HPs}.

\subsubsection{PCQM4Mv2-subset}
PCQM4Mv2-subset is a subset of the Large-Scale Open Graph Benchmark PCQM4Mv2 dataset \citep{hu2021ogb} . As mentioned in \cite{gps}, this subset samples the original PCQM4Mv2 dataset as follows: 10\% of the original training set, 33\% of the original validation set, the entire test set. This results in 322,869 training graphs, 50,000 validation graphs, and 73,545 test graphs, for a total dataset size of 446,405 molecular graphs. In order to align with \cite{gps}, use the exact same PCQM4Mv2-subset graphs that they use in their original ablation study. 

Similarly to ZINC, PCQM4Mv2 is a molecular dataset, where each graph represents a molecule, with nodes denoting atoms and edges denoting bonds. The dataset presents a graph-level regression task to predict the DFT-calculated HOMO-LUMO energy gap of a given molecule.

\paragraph{GPS} We do not perform a structured grid search for GPS+\ourpe results on PCQM4Mv2 in \Cref{tab:pe-comp}. Instead, we simply try a few different model configurations (which can be found in our code repository) inspired by our experiments on ZINC and QM9. The final model used in our reported results is summarized as follows:
\begin{center}
\begin{tabular}{ccc} \toprule
\emph{MPNN Type}: GatedGCN& \emph{Attention Type}: Standard* & \emph{Layers}: 12  \\
   \emph{Width/Heads}: 256 / 4 &\emph{Batch Size}: 256& \emph{Optimizer}: AdamW  \\
     \emph{Weight Decay}: 0 & \emph{Base LR}: 0.0002 &
     \emph{Max Epochs}: 300\\  \emph{LR Scheduler}: Cosine&
     \emph{LR Warmup Epochs}: 10 & \emph{Attention Dropout}: 0.1\\
    \bottomrule
    \multicolumn{3}{p{11cm}}{\small{*The "Standard" attention type refers to multi-head scaled dot product attention.}}
\end{tabular}
\end{center}

\subsubsection{CIFAR10}
The original CIFAR10 dataset consists of 60,000 32x32 color images in 10 classes, with 6,000 images in each class \citep{krizhevsky2009learning}. From this, \cite{dwivedi2023benchmarking} generate a graph benchmarking dataset by constructing an 8 nearest-neighbor graph of SLIC superpixels for each image. The graph benchmarking dataset maintains the same splits as the original image dataset, which contains 45,000 training, 5,000 validation, and 10,000 test graphs. Following \cite{gps}, we constrain our reported model to $\sim$100K tunable parameters.

\paragraph{GPS} We grid search GPS+\ourpe on CIFAR10 (\Cref{tab:pe-comp}) as follows. First, we fix the following hyperparameters:
\begin{center}
\begin{tabular}{ccc} \toprule
\emph{MPNN Type}: GatedGCN& \emph{Attention Type}: Standard  &\emph{Attention Heads}: 4 \\ \emph{Batch Size}: 16  & \emph{Base LR}: 0.001 & \emph{Max Epochs (warmup)}: 100 (5)\\
     \emph{LR Scheduler}: Cosine& \emph{Optimizer}: AdamW& \emph{Attention Dropout}: 0.5\\
    \bottomrule
\end{tabular}
\end{center}

Then, we grid-search all combinations of the following: (the best configuration with $\sim$100K parameters is shown in \textbf{bold})
\begin{align*}
    &\text{Layers: }\{2,\textbf{3}\}
    \qquad\quad\text{Width: }\{36,\textbf{52},72\}\qquad\quad \text{Weight Decay: }\{10^{-3},\mathbf{10^{-4}},10^{-5}\}\\
    &\text{Dropout of Feed-forward and MPNN Layers: }\{0.0,\mathbf{0.2},0.4\}
\end{align*}
\paragraph{Additional Results} Our reported results in \Cref{tab:pe-comp} use homomorphism counts from the family $\Omega_{\leq5}^{con}$. In \Cref{tab:cifar-extra} below, we also report results for \ourpe constructed from $\spasm(C_7) \cup \spasm(C_8)$, as well as results that exceed the 100K parameter budget (hyperparameter details for these extra experiments can be found in the code repository).
\begin{table}[h]
\centering
\caption{We benchmark GPS+\ourpe on CIFAR10. Our pattern families are $\mathcal{S}=\spasm(C_7) \cup \spasm(C_8)$ and $\Omega=\Omega_{\leq5}^{con}$. GPS+LapPE results from \cite{gps} are included for reference. \ourpe outperforms LapPE across the board, and larger models perform best.}
\label{tab:cifar-extra}
\begin{tabular}{lcc}
    \toprule
    Model & CIFAR10 (Acc. $\uparrow$) & Parameters \\
    \midrule
     GPS+LapPE & 72.31\vartxtm{0.34} & 112,726\\
     $\text{GPS}_{\text{small}}\text{+}\text{\ourpe}_{\mathcal{S}}$ \it{(ours)} & {73.34}\vartxtm{0.50} & 110,188 \\
$\textbf{GPS}_{\textbf{big}}\textbf{+}\textbf{\ourpe}_{\mathcal{S}}$  \it{(ours)}  & \textbf{75.00}\vartxtm{0.59} & 267,134 \\
    $\text{GPS}_{\text{small}}\text{+}\text{\ourpe}_{\Omega}$  \it{(ours)}  & {73.50}\vartxtm{0.44} & 114,330\\
$\text{GPS}_{\text{big}}\text{+}\text{\ourpe}_{\Omega}$  \it{(ours)}  & {74.73}\vartxtm{0.64} & 215,794\\
     \bottomrule
\end{tabular}
\end{table}

\subsection{MNIST}
\label{app:mnist}
Due to the success of \ourpe on CIFAR10, we conduct an additional experiment with GPS+\ourpe on the MNIST image classification dataset. The original MNIST dataset consists of 70,000 28x28 black and white images in 10 classes, representing handwritted digits \citep{deng2012mnist}. From this, \cite{dwivedi2023benchmarking} follow the same procedure as for CIFAR10 and generate a graph benchmarking dataset by constructing an 8 nearest-neighbor graph of SLIC superpixels for each image. The graph dataset also has the same original dataset splits of 55,000 training, 5,000 validation, and 10,000 test graphs.

In \Cref{tab:mnist}, we compare $\text{GPS+\ourpe}_{\Omega_{\leq5}^{con}}$ with other leading GNN and Graph Transformer models, and we follow \cite{gps} in constraining our model to $\sim$100K tunable parameters. In \Cref{tab:mnist-extra}, we report additional GPS+\ourpe results using counts from $\spasm(C_7) \cup \spasm(C_8)$, as well as GPS+\ourpe models that exceed the 100K parameter budget. We do not grid-search GPS+\ourpe on MNIST, and instead we simply extrapolate from our grid-search on CIFAR10 to infer our model configurations (see code repository for details). All results for GPS+\ourpe are the mean ($\pm$ standard deviation) of 4 runs with different random seeds, and all results for other models are taken from \cite{gps}. We observe that GPS+\ourpe is competitive with leading GNNs on MNIST.

{\renewcommand{\arraystretch}{1}%
\setlength{\tabcolsep}{3pt}
\small
\begin{table}[h!]
    \begin{minipage}{0.4\textwidth}
        \centering
        \caption{Results on MNIST with $\sim$100K parameter models. Best results in \textcolor{red}{red}, second best in \textcolor{blue}{blue}.}
        \label{tab:mnist}
            \begin{tabular}{lc}
            \toprule
            Model & Acc. $\uparrow$ \\
            \midrule
            GIN             & 96.485\vartxtm{0.252} \\
            GAT             & 95.535\vartxtm{0.205} \\
            CRaW1           & 97.944\vartxtm{0.050} \\
            GRIT            & 98.108\vartxtm{0.111} \\
            \textbf{EGT}             & \textcolor{red}{98.173}\vartxtm{0.087} \\
            GPS(+LapPE)     & 98.051\vartxtm{0.126} \\
            *\textbf{GPS+\ourpe}   & \textcolor{blue}{98.135}\vartxtm{0.002} \\
            \bottomrule
            \multicolumn{2}{l}{\small * denotes \emph{our} model.} \\
            \end{tabular}
    \end{minipage}
    \hfill
        \centering
    \begin{minipage}{0.55\textwidth}
        \centering
        \caption{GPS+\ourpe on MNIST with various configurations. We include EGT and GPS+LapPE for reference. $\mathcal{S}=\spasm(C_7) \cup \spasm(C_8)$ and $\Omega=\Omega_{\leq5}^{con}$. Best results in \textcolor{red}{red}, second best in \textcolor{blue}{blue}.}
        \label{tab:mnist-extra}
    \begin{tabular}{lcc}
    \toprule
    Model & Acc. $\uparrow$ & $\sim$Parameters \\
    \midrule
     EGT & {98.173}\vartxtm{0.087} & 100K\\
    GPS+(LapPE) & 98.051\vartxtm{0.126}  & 100K\\
     *$\text{GPS}_{\text{small}}\text{+}\text{\ourpe}_{\mathcal{S}}$  & {98.090}\vartxtm{0.050} & 100K \\
*$\textbf{GPS}_{\textbf{big}}\textbf{+}\textbf{\ourpe}_{\mathcal{S}}$    & \textcolor{red}{98.273}\vartxtm{0.075} & 400K \\
    *$\text{GPS}_{\text{small}}\text{+}\text{\ourpe}_{\Omega}$   & {98.135}\vartxtm{0.002} & 100K\\
*$\textbf{GPS}_{\textbf{big}}\textbf{+}\textbf{\ourpe}_{\Omega}$   & \textcolor{blue}{98.198}\vartxtm{0.180} & 200K\\
     \bottomrule
     \multicolumn{3}{l}{\small * denotes \emph{our} model.}
\end{tabular}
    \end{minipage}

\end{table}
} 
\subsection{Results on Peptides-struct and Peptides-func Datasets}
\label{app:peptides}
The Peptides-func and Peptides-struct datasets were introduced in the Long Range Graph Benchmark \citep{lrgb}. Similar to other chemistry benchmarks, each graph represent a peptide, where nodes denote atoms and edges denote the bonds between atoms. However, peptides are generally much larger than the small drug-like molecules in other datasets, such as ZINC. 

Peptides-func presents a 10-class multilabel classification task, whereas Peptides-struct presents an 11-target regression task. We compare the performance of \ourpe with motif patterns $\spasm(C_7) \cup \spasm(C_8)$, \ourpe with motifs $\Omega_{\leq5}^{con} \cup C_6$, and RWSE of length 8 because each of these three encoding have similar ``reach" (i.e., the edge-radius of each node encoding's receptive field). In order to isolate the power of our encodings, we decide to use a standard self-attention Graph Transfer \cite{dwivedi2020generalization} as our reference model. Since this model does not contain any built-in graph inductive biases, all expressive power is derived from the encodings. Also, note that GT does \emph{not} utilize the edge features provided in the Peptides dataset because GT does not receive the adjacency matrix as an explicit input. \Cref{tab:peptides} shows that \ourpe once again outperforms RWSE on both Peptides-func and Peptides-struct.

\begin{table}[h]
\centering
\caption{We benchmark on the peptides-struct and peptides-func datasets using self-attention Graph Transformer to isolate the performance gains of MoSE vs RWSE. Our pattern families are $\mathcal{S}=\spasm(C_7) \cup \spasm(C_8)$ and $\Omega=\Omega_{\leq5}^{con} \cup C_6$. Reported results are the mean ($\pm$ std dev.) of 4 runs with different random seeds. Best results are in \textcolor{red}{red}, second best in \textcolor{blue}{blue}.}
\label{tab:peptides}
\begin{tabular}{lcc}
    \toprule
    Model & Peptides-struct (MAE $\downarrow$) & Peptides-func (AP $\uparrow$) \\
    \midrule
    GT+none & 0.454\vartxtm{0.027} & 0.447\vartxtm{0.017} \\
     GT+RWSE & 0.344\vartxtm{0.009} & 0.625\vartxtm{0.012} \\
     \textbf{GT+}$\textbf{\ourpe}_{\mathcal{S}}$ \it{(ours)} & \textcolor{blue}{0.339}\vartxtm{0.007} & \textcolor{red}{0.635}\vartxtm{0.011} \\
    \textbf{GT+}$\textbf{\ourpe}_{\Omega}$ \it{(ours)} & \textcolor{red}{0.318}\vartxtm{0.010} & \textcolor{blue}{0.629}\vartxtm{0.011} \\
     \bottomrule
\end{tabular}
\end{table}

\paragraph{GT} We do not perform hyperparameter tuning on Peptides-func, using a 4-layer GT with width 96 for all encoding types. On Peptides-struct, the training behavior is slightly more erratic, so we conduct a minimal hyperparameter search for each encoding (we search the same configurations on each encoding independently for fair comparison). We omit the details here, but the reader may reference the code repository for exact configurations. \Cref{tab:peptides-hps} gives a summary of our hyperparameters on Peptides-struct. All models (for both tasks) are trained for 200 epochs using the AdamW optimizer, a batch size of 128, and a base LR of 0.0003 (annealed with a cosine scheduler).

\begin{table}[h]
\centering
\caption{A summary of the GT hyperparameters used for Peptides-struct results.}
\label{tab:peptides-hps}
\begin{tabular}{lccc}
    \toprule
    Model & Layers & Width & Encoding dim \\
    \midrule
    GT+none &  4 & 96  &  N/A \\
     GT+RWSE  &  1 & 36  &  16 \\
     \text{GT+}$\text{\ourpe}_{\mathcal{S}}$ \it{(ours)}  &  1 & 36  &  24 \\
    \text{GT+}$\text{\ourpe}_{\Omega}$ \it{(ours)}  & 1  & 36  &  16 \\
     \bottomrule
\end{tabular}
\end{table}

\subsection{ZINC Experimental Details \& Additional Results}
\label{app:zinc}

We use the same ZINC dataset setup as described above in \Cref{app:pe-comp-zinc}. All reported results are the mean ($\pm$ standard deviation) of 4 runs with different random seeds.

\subsubsection{Additional Results on ZINC with Edge Features}\label{app:zinc-e-extraresults}
In \Cref{tab:zinc-e-full}, we provide additional results on ZINC-12k with edge features. Namely, we reiterate \Cref{tab:zinc-main} while also including R-GCN results for each encoding type, as well as RWSE+\ourpe results where we concatenate the RWSE vector to the \ourpe vector for a combined encoding. As stated in the main text, \ourpe consistently outperforms RWSE. Interestingly, the comparison between \ourpe and the combination of RWSE+\ourpe is less consistent. Although RWSE+\ourpe contains more information than \ourpe alone, there are practical limitations of the combined encoding which may explain its comparatively-worse performance on some models – e.g., the extra information provided by \emph{two} positional/structural encodings requires drastic changes in the model hyperparameters, such as an increase in model width, which may degrade performance more than the additional encoding information improves it (especially if the two encodings contain redundant information). Regardless, the differences between \ourpe and RWSE+\ourpe are small across all tasks.

\begin{table*}[h]
\caption{We report mean absolute error (MAE) for graph regression on ZINC-12K (with edge features) across several models. \ourpe yields substantial improvements over RWSE for every architecture. Best results are shown in \textcolor{red}{red}, second best in \textcolor{blue}{blue}}
\label{tab:zinc-e-full}
\begin{center}
\begin{tabular}{lcccc} 
\toprule
Model&\multicolumn{4}{c}{Encoding Type}\\
\cmidrule(r){2-5}
 &\textit{none}& RWSE&   MoSE&RWSE+MoSE\\
\midrule
MLP-E&0.606\vartxtm{0.002}& 0.361\vartxtm{0.010}&   \textcolor{red}{0.347}\vartxtm{0.003}&\textcolor{blue}{0.349}\vartxtm{0.003}\\ 

R-GCN&0.413\vartxtm{0.005}& 0.207\vartxtm{0.007}&   \textcolor{blue}{0.197}\vartxtm{0.004}&\textcolor{red}{0.188}\vartxtm{0.005}\\ 

GIN-E&0.243\vartxtm{0.006}& 0.122\vartxtm{0.003}&   \textcolor{blue}{0.118}\vartxtm{0.007}&\textcolor{red}{0.117}\vartxtm{0.005}\\
 GIN-E+VN& 0.151\vartxtm{0.006}& 0.085\vartxtm{0.003}& \textcolor{blue}{0.068}\vartxtm{0.004}&\textcolor{red}{0.064}\vartxtm{0.003}\\ 

GT-E&0.195\vartxtm{0.025}& 0.104\vartxtm{0.025}&   \textcolor{red}{0.089}\vartxtm{0.018}&\textcolor{blue}{0.090}\vartxtm{0.005}\\

 GPS&0.119\vartxtm{0.011}&0.069\vartxtm{0.001}&   \textcolor{red}{0.062}\vartxtm{0.002}&\textcolor{blue}{0.065}\vartxtm{0.002}\\
 \bottomrule
\end{tabular}
\end{center}
\end{table*}
\textbf{Note:} The GPS+RWSE result presented in Tables \ref{tab:zinc-e-full} and \ref{tab:zinc-main} is marginally different from the result presented in \Cref{tab:pe-comp} (0.069 MAE versus 0.070 MAE, respectively). This discrepancy is due to a difference in GPS hyperparameters: the results in Table \ref{tab:zinc-e-full}/\ref{tab:zinc-main} follow our hyperparameter-search procedure given in \Cref{app:zinc HPs}, whereas \Cref{tab:pe-comp} pulls the reported GPS+RWSE result from \cite{gps} and thus follows their hyperparameter procedure.

\subsubsection{Additional Results on ZINC without Edge Features}
\citet{dwivedi2023benchmarking} also present a set of experiments that uses the ZINC-12k dataset without edge features. This is because there are a number of models that do not take edge features into account. Therefore, we also perform a set of experiments on ZINC that do not include edge features and report these results in \Cref{tab:zinc-noe}. Similarly to our experiments with edge features, we see that \ourpe consistently outperforms RWSE across all models, whereas the comparison between \ourpe and RWSE+\ourpe yields mixed results.

\begin{table*}[h]
    \centering
    \caption{We report additional MAE results for various models on ZINC-12k without edge features. Best results are shown in \textcolor{red}{red}, second best in \textcolor{blue}{blue}}
    \label{tab:zinc-noe} 
    \begin{tabular}{lcccc}
    \toprule
    Model &  \textit{none} & RWSE  & \ourpe & RWSE+\ourpe \\
    \midrule
    MLP &  0.663\vartxtm{0.002}   & 0.263\vartxtm{0.006}  & \textcolor{blue}{0.218}\vartxtm{0.005}      & \textcolor{red}{0.202}\vartxtm{0.001} \\
    GIN &  0.294\vartxtm{0.012}   & 0.190\vartxtm{0.004}  & \textcolor{red}{0.158}\vartxtm{0.004}      & \textcolor{blue}{0.168}\vartxtm{0.005} \\
    GT  &  0.674\vartxtm{0.001}   & 0.217\vartxtm{0.007}  & \textcolor{blue}{0.209}\vartxtm{0.020}      & \textcolor{red}{0.184}\vartxtm{0.001} \\
    GPS &  0.178\vartxtm{0.016}   & 0.116\vartxtm{0.001}  & \textcolor{red}{0.102}\vartxtm{0.001}      & \textcolor{blue}{0.105}\vartxtm{0.006} \\
    \bottomrule
    \end{tabular} 
\end{table*}

\subsubsection{ZINC Hyperparameters}\label{app:zinc HPs}
We describe the hyperparameters and grid-searches used for our experiments on ZINC (Tables \ref{tab:pe-comp}, \ref{tab:zinc-main}, \ref{tab:zinc-overall}, and \ref{tab:zinc-e-full}). Note that all models are restricted to a parameter budget of 500K. See \Cref{tab:zinc-hp} for a rough summary of the final hyperparameters.

\begin{table*}[h!]
\caption{Hyperparameter summary for \ourpe results on ZINC from \Cref{tab:zinc-main}.}
\label{tab:zinc-hp}
\begin{center}
\begin{small}
\begin{tabular}{lccccccc}
\toprule
 Model&  Layers&  Hidden Dim& Batch Size&  Learning Rate&  Epochs&  \ourpe dim  & \#Heads\\
\midrule
 MLP-E      &     8&  64 &  32 &  0.001&  1200&  28&   N/A\\
 GIN-E      &     4& 110 &  128 &  0.001&  1000&  42&   N/A\\
 GIN-E+VN   &     4& 110 &  32 &  0.001&  1000&  42&   N/A\\
 GT-E       &     10&  64&  32&  0.001&  2000&  28&   4\\
 GPS        &     8&  92-64 &  32&  0.001&  2000&  42&     4\\
\bottomrule
\end{tabular}
\end{small}
\end{center}
\end{table*}
\newpage
\paragraph{MLP-E} We perform a grid-search on MLP-E for our ZINC results presented in \Cref{tab:zinc-main}. We first fix the following hyperparameters:
\begin{center}
\begin{tabular}{cc} \toprule
    \emph{Batch Size}: 32& \emph{Optimizer}: Adam  \\
     \emph{Weight Decay}: 0.001 & \emph{Base LR}: 0.001\\
     \emph{Max Epochs}: 1200 & \emph{LR Scheduler}: Cosine\\
    \emph{LR Warmup Epochs}: 10 & %
    \\
    \bottomrule
\end{tabular}
\end{center}
Then, we search \text{MLP-E}+\textit{none} on every combination of the following values, with the best values shown in \textbf{bold}:
\begin{align*}
    &\text{Node Encoder Layers: }\{\textbf{2},4,8\}\\
    &\text{Node Encoder Width: }\{\textbf{64},128,256\}\\
    &\text{Edge Encoder Layers: }\{2,4,\textbf{8}\}\\
    &\text{Edge Encoder Width: }\{32,64,\textbf{128}\}\\
    &\text{Graph Encoder Depth: }\{1,2,\textbf{4}\}
\end{align*}
Here, the ``graph encoder" is an MLP that follows sum-pooling and precedes the MLP prediction head described in \Cref{app:hp proto}. The graph encoder width is the sum of the node encoder width and the edge encoder width. Using the best edge encoder and graph encoder from this first search, we then search MLP-E+RWSE and MLP-E+\ourpe on all combinations of:
$$\text{Node Encoder Layers}:\{2,4^{*},\textbf{8}\}\qquad\text{Node Encoder Width}:\{\textbf{64},128^{*},256\}$$
where the best values for \ourpe are shown in \textbf{bold} and the best values for RWSE have an asterisk*.

\paragraph{R-GCN and GIN-E} We do not conduct hyperparameter searches on our MPNN results on ZINC shown in Tables \ref{tab:zinc-main} and \ref{tab:zinc-e-full}, opting instead to use the configurations from \cite{DBLP:conf/icml/JinBCL24}. Size hyperparameters for R-GCN refer to the results in \Cref{tab:zinc-e-full}, and hyperparameters that are not specific to R-GCN or GIN-E are used by both models in our experiments. The configurations are as follows:
\begin{center}
\begin{tabular}{cc} \toprule
    \emph{Batch Size}: 128& \emph{Optimizer}: Adam \\ 
    \emph{Max Epochs}: 1000 & \emph{Base LR}: 0.001\\
    \emph{LR Scheduler}: Reduce on plateau & \emph{LR Warmup Epochs}: 50\\
    \emph{LR Reduce Factor}: 0.5 & \emph{LR Patience (min)}: 10 (1e-5)\\
    \emph{GIN-E Layers}: 4 & \emph{GIN-E Width}: 110\\
    \emph{R-GCN Layers}: 4 & \emph{R-GCN Width}: 125\\    \bottomrule
\end{tabular}
\end{center}

\paragraph{GIN-E+VN} Again, we do not conduct hyperparameter searches for GIN-E+VN on ZINC (\Cref{tab:zinc-main}). Our configuration is as follows:
\begin{center}
\begin{tabular}{cc} \toprule
    \emph{Batch Size}: 32& \emph{Optimizer}: AdamW \\ 
    \emph{Max Epochs}: 1000 & \emph{Base LR}: 0.001\\
    \emph{LR Scheduler}: Cosine annealing & \emph{LR Warmup Epochs}: 50\\
    \emph{GIN-E Layers}: 4 & \emph{GIN-E Width}: 110\\  
    \emph{Weight Decay}: 0.00001 & \\
    \bottomrule
\end{tabular}
\end{center}

\paragraph{GT-E}
We describe our hyperparameter search for ZINC GT-E results given in \Cref{tab:zinc-main}. The following hyperparameters are fixed:
\begin{center}
\begin{tabular}{cc} \toprule
    \emph{Batch Size}: 32& \emph{Optimizer}: AdamW  \\
     \emph{Weight Decay}: 1e-5 & \emph{Base LR}: 0.001\\
     \emph{Max Epochs}: 2000 & \emph{LR Scheduler}: Cosine\\
     \emph{LR Warmup Epochs}: 50 & \emph{Attention Dropout}: 0.5\\
    \bottomrule
\end{tabular}
\end{center}
For all feature injections (\emph{none}, RWSE, \ourpe), we search the following model
sizes (formatted as a tuple where components indicate layers, width, and number of heads, respectively):
$$\{(8,84,4),(10,64,4),(4,120,8),(4\to6,92\to64,4)\}$$
where $(4\to6,92\to64,4)$ denotes a varied size model which has an initial width of 92 for 4 layers, and then (after being down-projected by a linear layer) has a final width of 64 for 6 more layers. The $(10,64,4)$ size model was best for all feature injections.

\paragraph{GPS}
We describe our hyperparameter search for ZINC GPS results given in \Cref{tab:zinc-main}. The following hyperparameters remain fixed throughout:
\begin{center}
\begin{tabular}{cc} \toprule
 \emph{MPNN Type}: GIN-E& \emph{Attention Type}: Standard  \\
    \emph{Batch Size}: 32& \emph{Optimizer}: AdamW  \\
     \emph{Weight Decay}: 1e-5 & \emph{Base LR}: 0.001\\
     \emph{Max Epochs}: 2000 & \emph{LR Scheduler}: Cosine\\
     \emph{LR Warmup Epochs}: 50 & \emph{Attention Dropout}: 0.5\\
    \bottomrule
\end{tabular}
\end{center}
For all feature injections (\emph{none}, RWSE, \ourpe), we search the following models sizes (formatted as a tuple where components indicate layers, width, and number of heads, respectively):
$$\{(8,76,4),(10,64,4),(4,104,8),(2\to6,92\to64,4)\}$$
where $(2\to6,92\to64,4)$ denotes a varied size model which has an initial width of 92 for 2 layers, and then (after being down-projected by a linear layer) has a final width of 64 for 6 more layers. The $(8,76,4)$ model performed best for \emph{none}-encoding, while $(2\to6,92\to64,4)$ was best for RWSE and \ourpe.

\paragraph{GRIT} For the GRIT results presented in \Cref{tab:zinc-overall}, we simply remove the RRWP node and node-pair encodings from the configuration used in \cite{grit}, and replace them with \ourpe that is constructed from $\spasm(C_7) \cup \spasm(C_8)$. We summarize the hyperparameters as follows:
\begin{center}
\begin{tabular}{cc} \toprule
    \emph{Layers}: 10& \emph{Width/Heads}: 64 / 8  \\
    \emph{Batch Size}: 32& \emph{Optimizer}: AdamW  \\
     \emph{Weight Decay}: 1e-5 & \emph{Base LR}: 0.001\\
     \emph{Max Epochs}: 2000 & \emph{LR Scheduler}: Cosine\\
     \emph{LR Warmup Epochs}: 50 & \emph{Attention Dropout}: 0.2\\
    \bottomrule
\end{tabular}
\end{center}

\paragraph{Other Experiments on ZINC} Hyperparameter searches for RWSE+\ourpe results in \Cref{tab:zinc-e-full} are conducted identically to the searches for \ourpe and RWSE detailed above. Searches for ZINC without edge features (\Cref{tab:zinc-noe}) are conducted analogously. See the code repository for details.

\subsection{QM9 Experimental Details \& Additional Results}
\label{app:qm9}

The QM9 dataset is another molecular dataset that consists of 130,831 graphs \citep{wu2018moleculenet}. They are split into 110,831 graphs for training, 10,000 for validation, and 10,000 for testing. Node features indicate the atom type and other additional atom features, such as its atomic number, the number of Hydrogens connected to it, etc. The edge features indicate bond type between two atoms. The task in this dataset is to predict 13 different quantum chemical properties, ranging from a molecule's dipole moment ($\mu$) to its free energy ($G$). For all experiments \emph{except} those using the GT model (\Cref{tab:qm9-gt}), we include the use of edge features. All GPS results in \Cref{tab:qm9-gps} are the mean ($\pm$ standard deviation) of 5 runs with different random seeds, while results for other models are taken directly from their original works \citep{Brockschmidt20, Alon-ICLR21, AbboudDC22, Dimitrov2023}. All appendix results in Tables \ref{tab:qm9-gps-full}, \ref{tab:qm9-mlpe}, and \ref{tab:qm9-gt} (excluding those that are copied from \Cref{tab:qm9-gps}) are the mean ($\pm$ standard deviation) of 3 runs with different random seeds.

\subsubsection{Additional Results on QM9}
\Cref{tab:qm9-gps-full} provides additional results for GPS on QM9 where we use the combined encoding MoSE+RWSE. Additionally, we provide QM9 experiments with MLP-E (\Cref{app:models}) and the GT model \citep{dwivedi2020generalization} in \Cref{tab:qm9-mlpe} and \Cref{tab:qm9-gt} respectively. Across all models, we see that \ourpe typically outperforms RWSE. There are notable exceptions however, such as GT on the tasks U0, U, and H, where GT+RWSE does particularly well (\Cref{tab:qm9-gt}). The comparison between \ourpe alone and RWSE+\ourpe consistently favors the combined encoding on MLP-E and GT (Tables \ref{tab:qm9-mlpe} and \ref{tab:qm9-gt}), whereas the results on GPS are more mixed, with more tasks favoring \ourpe (\Cref{tab:qm9-gps-full}). In \Cref{tab:qm9-mlpe}, we also experiment with MLP-E+$\text{Hom}$ where Hom denotes the \emph{un}-rooted homomorphism counts $\text{Hom}(\aG,\cdot)$ with $\aG=\spasm(C_7) \cup \spasm(C_8)$. We inject these counts as a graph-level feature by concatenating them to the learned graph representation prior to the final MLP prediction head. We see in \Cref{tab:qm9-mlpe} that the node-rooted \ourpe consistently outperforms its graph-level counterpart.
\begin{table*}[h]
\centering
\caption{We report MAE results for GPS with various encodings on the QM9 dataset.}
\label{tab:qm9-gps-full} 
\begin{tabular}{lcccc}
 \toprule
 Property &   GPS & +RWSE & +\ourpe & +RWSE+\ourpe \\
 \midrule
 mu     &   1.52\vartxtm{0.02} & 1.47\vartxtm{0.02}& 1.43\vartxtm{0.02}& 1.45\vartxtm{0.01}\\
 alpha  &   2.62\vartxtm{0.38}& 1.52\vartxtm{0.27}& 1.48\vartxtm{0.16} & 1.72\vartxtm{0.11}\\
 HOMO   &   1.17\vartxtm{0.41}& 0.91\vartxtm{0.01}& 0.91\vartxtm{0.01} & 0.92\vartxtm{0.01}\\
 LUMO   &   0.92\vartxtm{0.01}& 0.90\vartxtm{0.06}& 0.86\vartxtm{0.01} & 0.88\vartxtm{0.16}\\
 gap    &   1.46\vartxtm{0.02}& 1.47\vartxtm{0.02}& 1.45\vartxtm{0.02} & 1.48\vartxtm{0.01}\\
 R2     &   6.82\vartxtm{0.31}& 6.11\vartxtm{0.16}& 6.22\vartxtm{0.19} & 6.01\vartxtm{0.03}\\
 ZPVE   &   2.25\vartxtm{0.18}& 2.63\vartxtm{0.44}& 2.43\vartxtm{0.27} & 2.23\vartxtm{0.25}\\
 U0     &   0.96\vartxtm{0.34}& 0.83\vartxtm{0.17}& 0.85\vartxtm{0.08} & 0.80\vartxtm{0.05}\\
 U      &   0.81\vartxtm{0.05}& 0.83\vartxtm{0.15}& 0.75\vartxtm{0.03} & 0.78\vartxtm{0.03}\\
 H      &   0.81\vartxtm{0.26}& 0.86\vartxtm{0.15}& 0.83\vartxtm{0.09} & 0.87\vartxtm{0.16}\\
 G      &   0.77\vartxtm{0.04}& 0.83\vartxtm{0.12}& 0.80\vartxtm{0.14} & 0.72\vartxtm{0.03}\\
 Cv     &   2.56\vartxtm{0.72}& 1.25\vartxtm{0.05}& 1.02\vartxtm{0.04} & 1.03\vartxtm{0.08}\\
 Omega  &   0.40\vartxtm{0.01}& 0.39\vartxtm{0.02}& 0.38\vartxtm{0.01} & 0.39\vartxtm{0.01}\\
\bottomrule
\end{tabular} 
\end{table*}

\begin{table*}[h]
    \centering
    \caption{We report MAE results for MLP-E with various encodings on the QM9 dataset.}
    \label{tab:qm9-mlpe} 
    \begin{tabular}{lccccc}
    \toprule
    Property &  MLP-E & +RWSE  & +\ourpe & +Hom &+RWSE+\ourpe \\
    \midrule
    mu     &  5.31\vartxtm{0.03}   & 4.91\vartxtm{0.06}  & 4.88\vartxtm{0.05}   & 5.21\vartxtm{0.04}  & 4.84\vartxtm{0.03} \\
    alpha  &  6.76\vartxtm{0.04}   & 5.08\vartxtm{0.18}  & 5.11\vartxtm{0.11}   & 5.52\vartxtm{0.06}  & 4.83\vartxtm{0.09} \\
    HOMO   &  2.96\vartxtm{0.03}   & 2.46\vartxtm{0.03}  & 2.36\vartxtm{0.04}  &  2.75\vartxtm{0.01}  & 2.34\vartxtm{0.02} \\
    LUMO   &  3.17\vartxtm{0.04}   & 2.76\vartxtm{0.3}  & 2.65\vartxtm{0.03}   &  3.11\vartxtm{0.01}  & 2.57\vartxtm{0.03} \\
    gap    &  4.34\vartxtm{0.02}   & 3.58\vartxtm{0.02}  & 3.45\vartxtm{0.05}   & 4.08\vartxtm{0.07}  & 3.35\vartxtm{0.04} \\
    R2     &  43.69\vartxtm{1.08}  & 26.69\vartxtm{0.31}  & 26.50\vartxtm{0.43}  & 35.85\vartxtm{0.45} & 24.97\vartxtm{0.45} \\
    ZPVE   &  14.36\vartxtm{0.60}  & 10.42\vartxtm{1.24}  & 8.94\vartxtm{0.83} &  11.51\vartxtm{2.33}  & 10.28\vartxtm{0.35} \\
    U0     &  8.06\vartxtm{0.78}   & 5.39\vartxtm{0.58}  & 5.30\vartxtm{0.41}  &  5.45\vartxtm{0.51}  & 4.74\vartxtm{0.28} \\
    U      &  8.33\vartxtm{0.56}   & 5.34\vartxtm{0.59}  & 5.19\vartxtm{0.39}  &  5.17\vartxtm{0.86}  & 5.10\vartxtm{0.62} \\
    H      &  8.19\vartxtm{0.33}   & 5.17\vartxtm{0.28}  & 4.77\vartxtm{0.16}   &  5.49\vartxtm{0.38} & 5.08\vartxtm{0.33} \\
    G      &  7.58\vartxtm{0.14}   & 5.83\vartxtm{0.46}  & 5.28\vartxtm{0.54}   & 5.04\vartxtm{0.28}  & 5.03\vartxtm{0.36} \\
    Cv     &  6.06\vartxtm{0.14}   & 3.80\vartxtm{0.15}  & 3.73\vartxtm{0.25}  &  4.19\vartxtm{0.16}  & 3.72\vartxtm{0.44} \\
    Omega  &  1.61\vartxtm{0.04}   & 1.29\vartxtm{0.02}  & 1.17\vartxtm{0.02}   & 1.49\vartxtm{0.02}  & 1.24\vartxtm{0.05} \\
    \bottomrule
    \end{tabular} 
\end{table*}

\newpage
\begin{table*}[t]
    \centering
    \caption{We report MAE results for GT with various encodings on the QM9 dataset.}
    \label{tab:qm9-gt} 
    \begin{tabular}{lcccc}
    \toprule
    Property &  GT & +RWSE  & +\ourpe & +RWSE+\ourpe \\
    \midrule
    mu     &  4.12\vartxtm{0.05}   & 2.53\vartxtm{0.06}  & 2.43\vartxtm{0.01}  & 2.31\vartxtm{0.06} \\
    alpha  &  8.80\vartxtm{4.65}   & 1.43\vartxtm{0.03}  & 2.00\vartxtm{0.08}  & 1.32\vartxtm{0.01} \\
    HOMO   &  1.72\vartxtm{0.01}   & 1.15\vartxtm{0.01}  & 1.11\vartxtm{0.01}  & 1.08\vartxtm{0.01} \\
    LUMO   &  1.63\vartxtm{0.01}   & 1.05\vartxtm{0.02}  & 1.02\vartxtm{0.01}  & 0.99\vartxtm{0.01} \\
    gap    &  2.38\vartxtm{0.03}   & 1.67\vartxtm{0.01}  & 1.62\vartxtm{0.01}  & 1.57\vartxtm{0.02} \\
    R2     &  26.73\vartxtm{8.59}  & 8.30\vartxtm{0.32}  & 10.69\vartxtm{1.56} & 7.80\vartxtm{0.17} \\
    ZPVE   &  10.39\vartxtm{1.35}  & 2.46\vartxtm{0.04}  & 4.66\vartxtm{2.09}  & 2.37\vartxtm{0.03} \\
    U0     &  6.27\vartxtm{1.95}   & 0.94\vartxtm{0.10}  & 2.56\vartxtm{1.66}  & 0.98\vartxtm{0.57} \\
    U      &  5.38\vartxtm{0.17}   & 0.92\vartxtm{0.03}  & 3.10\vartxtm{2.75}  & 1.40\vartxtm{0.72} \\
    H      &  5.30\vartxtm{0.17}   & 0.92\vartxtm{0.06}  & 3.49\vartxtm{2.02}  & 0.96\vartxtm{0.51} \\
    G      &  5.44\vartxtm{0.51}   & 0.88\vartxtm{0.08}  & 2.72\vartxtm{1.21}  & 0.63\vartxtm{0.03} \\
    Cv     &  4.32\vartxtm{0.07}   & 1.25\vartxtm{0.01}  & 1.43\vartxtm{0.04}  & 1.15\vartxtm{0.01} \\
    Omega  &  1.25\vartxtm{0.01}   & 0.50\vartxtm{0.01}  & 0.47\vartxtm{0.02}  & 0.46\vartxtm{0.01} \\
    \bottomrule
    \end{tabular} 
\end{table*}

\subsubsection{QM9 Hyperparameters}\label{subsub:QM9 HPs}

We give a rough summary of the hyperparameters that we use for our QM9 experiments in \Cref{tab:qm9-hp}, and we describe our search procedures below. Different tasks use different hyperparameters. The precise setup can be found in our code repository. 

\paragraph{MLP-E} We do not perform any grid searching for MLP-E on QM9 (\Cref{tab:qm9-mlpe}) and simply use the same hyperparameters for every task and every encoding method, given below:
\begin{center}
\begin{tabular}{ccc} \toprule
 \emph{Node Encoder Depth}: 4& \emph{Node Encoder Width}: 220 & \emph{Edge Encoder Depth}: 2\\
 \emph{Edge Encoder Width}: 32& \emph{Graph Encoder Depth}: 3 & \emph{Graph Encoder Width}: 260\\
    \emph{Batch Size}: 128& \emph{Optimizer}: Adam & \emph{Weight Decay}: 0.001   \\
     \emph{Base LR}: 0.001 &
     \emph{Max Epochs}: 800 & \emph{LR Scheduler}: Cosine\\
    \emph{LR Warmup Epochs}: 10 & \emph{Dropout}: 0.1 &\\
    \bottomrule
\end{tabular}
\end{center}

\paragraph{GPS} We perform a 2-round hyperparameter search for GPS results on QM9 (Table \ref{tab:qm9-gps} \& \ref{tab:qm9-gps-full}). Firstly, the following default hyperparameters remain constant throughout:
\begin{center}
\begin{tabular}{cc} \toprule
    \emph{Batch Size}: 128& \emph{Optimizer}: AdamW  \\
     \emph{Weight Decay}: 1e-5 & \emph{Base LR}: 0.001\\
     \emph{Max Epochs}: 1200 & \emph{LR Scheduler}: Cosine\\
     \emph{LR Warmup Epochs}: 20 & \emph{Attention Dropout}: 0.5\\
    \bottomrule
\end{tabular}
\end{center}
We grid-search each encoding method independently (on a reduced 700 epochs). The first round only includes the tasks R2, ZPVE, U0, and gap, and we search over tuples of the form $\text{MPNN-type}\times\text{depth}\times\text{width}\times\text{heads}$:
$$\bigg(\{\text{R-GCN},\text{GIN-E}\}\times\{(8,128,4),(10,64,4)\}\bigg)\cup\bigg(\{(\text{GIN-E},12,256,8)\}\bigg)$$
One should interpret the $\times$ symbol as the Cartesian product on sets. Then, the set above describes the tuples (and thus the model configurations) that we search over. After performing this search, the best two models for each task are subsequently searched on every task/encoding according to the task groups given in \Cref{tab:qm9-gsearch}. These task groups are formed according to notions of task similarity described by \cite{GilmerSRVD17}. 

For example, the best performing GPS+\ourpe model on the gap task in the first round of our search is $(\text{R-GCN},10,64,4)$, and the second best model is $(\text{GIN-E},8,128,4)$. Since the LUMO task is in gap's group (as per \Cref{tab:qm9-gsearch}), we proceed to search these two GPS+\ourpe models, $(\text{R-GCN},10,64,4)$ and $(\text{GIN-E},8,128,4)$, on the LUMO task. This second round of searching reveals the final configuration for GPS+\ourpe on LUMO, which turns out to be $(\text{R-GCN},10,64,4)$. We repeat this procedure for every task-encoding pair, and we end up with the model configurations given in \Cref{tab:qm9-gps-final-hps}

\paragraph{GT} For our GT results on QM9 (\Cref{tab:qm9-gt}), we perform the same 2-round search procedure as with GPS, starting from the same default configuration (except we use 700 epoch for our final runs instead of 1200). Our first round searches over tuples in $\text{depth}\times\text{width}\times\text{heads}\times\text{base LR}$:
$$\{(8,128,8),(10,64,4),(12,256,8)\}\times\{0.0001,0.00001\}$$
Our second round proceeds analogously to that described in the GPS search (again using \Cref{tab:qm9-gsearch}), but highly varied training behavior across the tasks requires us to perform some additional (minor) task-specific tuning in the second round. We omit the details of this tuning here, asking readers to refer to the code repository. Ultimately, we get the final model configurations shown in \Cref{tab:qm9-gt-final-hps}.

\begin{table*}[h!]
\caption{The final GPS configurations for QM9 results.}\label{tab:qm9-gps-final-hps}
\begin{center}
\begin{tabular}{lcccc}
\toprule
  & None & RWSE & \ourpe & RWSE+\ourpe \\ 
   \midrule
 mu & $(\text{R-GCN}, 10, 64)$  &  $(\text{R-GCN}, 10, 64)$   &  $(\text{R-GCN}, 10, 64)$  & $(\text{R-GCN}, 10, 64)$ \\ 
 alpha & $(\text{R-GCN}, 10, 64)$ &$(\text{R-GCN}, 10, 64)$ & $(\text{R-GCN}, 10, 64)$ & $(\text{R-GCN}, 8, 128)$  \\ 
  HOMO &$(\text{GIN-E}, 8, 128)$  & $(\text{R-GCN}, 10, 64)$ & $(\text{R-GCN}, 10, 64)$ & $(\text{R-GCN}, 10, 64)$\\ 
LUMO &$(\text{R-GCN}, 10, 64)$  & $(\text{R-GCN}, 10, 64)$ & $(\text{R-GCN}, 10, 64)$ & $(\text{R-GCN}, 10, 64)$ \\  
gap &$(\text{R-GCN}, 10, 64)$  & $(\text{R-GCN}, 10, 64)$ & $(\text{R-GCN}, 10, 64)$ & $(\text{R-GCN}, 10, 64)$ \\ 
R2 &$(\text{R-GCN}, 10, 64)$  & $(\text{R-GCN}, 8, 128)$ & $(\text{R-GCN}, 10, 64)$ & $(\text{R-GCN}, 8, 128)$ \\  
ZPVE &$(\text{GIN-E}, 10, 64)$  & $(\text{GIN-E}, 8, 128)$ & $(\text{GIN-E}, 8, 128)$ & $(\text{GIN-E}, 10, 64)$ \\ 
U0 &$(\text{GIN-E}, 10, 64)$  & $(\text{GIN-E}, 8, 128)$  & $(\text{GIN-E}, 8, 128)$ & $(\text{R-GCN}, 10, 64)$ \\  
U &$(\text{R-GCN}, 10, 64)$ & $(\text{GIN-E}, 10, 64)$ & $(\text{GIN-E}, 8, 128)$ &$(\text{GIN-E}, 10, 64)$  \\  
H & $(\text{GIN-E}, 10, 64)$   & $(\text{GIN-E}, 10, 64)$  &$(\text{GIN-E}, 8, 128)$  &$(\text{GIN-E}, 10, 64)$   \\  
G & $(\text{R-GCN}, 10, 64)$  & $(\text{GIN-E}, 10, 64)$   & $(\text{GIN-E}, 8, 128)$ & $(\text{GIN-E}, 10, 64)$   \\ 
Cv &$(\text{R-GCN}, 10, 64)$  & $(\text{GIN-E}, 10, 64)$  & $(\text{R-GCN}, 10, 64)$ & $(\text{R-GCN}, 10, 64)$ \\
Omega & $(\text{GIN-E}, 10, 64)$  & $(\text{GIN-E}, 10, 64)$  & $(\text{R-GCN}, 10, 64)$& $(\text{R-GCN}, 10, 64)$ \\  \bottomrule
\multicolumn{5}{p{12.5cm}}{\small*Tuples are given as $(\text{MPNN-type},\text{depth},\text{width})$. All of our final models use 4 attention heads.} \\
    \end{tabular} 
\end{center}
\end{table*}

\begin{table*}[h!]
\caption{The final GT configurations for QM9 results.}\label{tab:qm9-gt-final-hps}
\begin{center}
\begin{tabular}{lcccc}
\toprule
  & None & RWSE & \ourpe & RWSE+\ourpe \\ 
 \midrule
 mu & $(8,128,10^{\text{-}4},0)$  &  $(12,256,10^{\text{-}4},0)$     & $(12,256,10^{\text{-}4},0)$    & $(12,256,10^{\text{-}4},0)$    \\ 
 alpha & $(8,128,10^{\text{-}4},0)$   &$(12,256,10^{\text{-}4},0)$   & $(8,128,10^{\text{-}4},0)$   & $(12,256,10^{\text{-}4},0)$    \\ 
  HOMO &$(12,256,10^{\text{-}4},0)$  & $(12,256,10^{\text{-}4},0)$ & $(12,256,10^{\text{-}4},0)$& $(12,256,10^{\text{-}4},0)$\\
  LUMO &$(12,256,10^{\text{-}4},0)$  & $(12,256,10^{\text{-}4},0)$ & $(12,256,10^{\text{-}4},0)$& $(12,256,10^{\text{-}4},0)$\\ 
  gap &$(12,256,10^{\text{-}4},0)$  & $(12,256,10^{\text{-}4},0)$ & $(12,256,10^{\text{-}4},0)$& $(12,256,10^{\text{-}4},0)$\\ 
  R2 &$(10,64,10^{\text{-}3},0)$  & $(12,256,10^{\text{-}4},0)$ & $(12,256,10^{\text{-}4},0)$& $(12,256,10^{\text{-}4},0)$\\ 
  ZPVE &$(8,48,10^{\text{-}4},0.3)$  & $(8,128,10^{\text{-}4},0)$ & $(8,128,10^{\text{-}4},0)$& $(8,128,10^{\text{-}4},0)$\\ 
  U0 &$(10,64,10^{\text{-}4},0)$  & $(8,128,10^{\text{-}4},0)$ & $(8,128,10^{\text{-}4},0)$& $(12,256,10^{\text{-}4},0)$\\ 
U &$(12,256,10^{\text{-}4},0)$ & $(8,128,10^{\text{-}4},0)$& $(8,128,10^{\text{-}4},0)$ &$(12,256,10^{\text{-}4},0)$  \\ 
H & $(10,64,10^{\text{-}4},0.5)$  & $(8,128,10^{\text{-}4},0)$ &$(10,64,10^{\text{-}4},0)$ &$(12,256,10^{\text{-}4},0)$   \\ 
G & $(8,48,10^{\text{-}4},0.3)$   & $(8,128,10^{\text{-}4},0)$  & $(10,64,10^{\text{-}4},0)$ & $(12,256,10^{\text{-}4},0)$  \\
Cv &$(8,48,10^{\text{-}4},0.3)$  & $(8,128,10^{\text{-}4},0)$ & $(8,128,10^{\text{-}4},0)$ & $(8,128,10^{\text{-}4},0)$ \\ 
Omega & $(8,48,10^{\text{-}4},0.3)$  & $(8,128,10^{\text{-}4},0)$ & $(8,128,10^{\text{-}4},0)$& $(8,128,10^{\text{-}4},0)$\\  \bottomrule
\multicolumn{5}{p{12.5cm}}{\small*Tuples are given as $(\text{depth},\text{width},\text{base-LR},\text{dropout})$, where dropout is applied to the feed-forward modules interweaving attention layers, and is tuned in the task-specific portion of the second round. The number of heads can be inferred from the depth/width.} \\
    \end{tabular} 
\end{center}
\end{table*}

\begin{table*}[h!]
\caption{Task groups for QM9 grid-searching procedure.}\label{tab:qm9-gsearch}
\begin{center}
\begin{tabular}{ l@{\hskip 0.35in}l } 
\toprule
\textbf{1st Round} & \textbf{2nd Round} \\
\midrule
R2 & R2, mu, alpha \\ 

ZPVE & ZPVE, Omega, Cv \\ 

U0 & U0, H, G, U \\ 

gap & gap, HOMO, LUMO \\
\bottomrule
\end{tabular}
\end{center}
\end{table*}

\newpage

\begin{table*}[t]
\caption{High-level summary of hyperparameters used for our QM9 experiments.}
\label{tab:qm9-hp}
\begin{center}
\begin{small}
\begin{tabular}{lcccccc}
\toprule
 Model&  Layers&  Hidden Dim& Batch Size&  Learning Rate&  Epochs & \#Heads\\
\midrule
 MLP-E      &     4&  220  &  128 &  0.001&  800&  N/A\\
 GT       &     8, 10, 12&  48, 64, 128, 256&  128&  0.001, 0.0001&  700&  4, 8\\
 GPS        &     8, 10&  64, 128 &  128&  0.001 &  1200&   4\\
\bottomrule
\end{tabular}
\end{small}
\end{center}
\end{table*}

\subsection{Synthetic Dataset Experimental Details}
\label{app:synth}
We generate a new synthetic dataset where the goal is to predict the fractional domination number of a graph. The dataset contains 10,000 randomly-generated Erd\H{o}s-R\'enyi graphs, where edge density varies between [0.25, 0.75] and the number of nodes varies between [16, 32]. There are \emph{no} initial node features or edge features provided, so the model must infer the target using only graph structure.

A fractional dominating function of a graph $G$ is a weight assignment $\alpha : V(G) \to [0,1]$ such that for every vertex $v$, the sum of the weights assigned to $v$ and its neighbors is at least 1. The total weight of $\alpha$ is given by $\sum_{v\in V(G)}\alpha(v)$. The \emph{fractional domination number} of $G$ is the smallest total weight of any fractional dominating function on $G$ \citep{fgt}.

We select a 4-layer MLP and a simple Graph Transformer \citep{dwivedi2020generalization} for our experiments in order to focus solely on the power of the graph inductive biases (\ourpe, RWSE, LapPE). That is, MLP and GT do not receive the adjacency matrix, initial node features, or initial edge features as an input: our model must infer the fractional domination number using \emph{only} the node-level positional/structural encoding vectors provided.

In \Cref{tab:synth-table}, we report the mean ($\pm$ standard deviation) of 4 runs with different random seeds. For each model, we do not perform any hyperparameter tuning. In order to control for the large homomorphism count values generated by \ourpe on this dataset, we scale the raw counts down by taking the $\text{log}_{10}$ of the true values. Please refer to the code repository for specific configurations and data files. 

\paragraph{MLP} Our MLP experiments on the synthetic dataset (\Cref{tab:synth-table}) use the following hyperparameters for all encodings:
\begin{center}
\begin{tabular}{ccc} \toprule
 \emph{Depth}: 4& \emph{Width}: 145 & \emph{Global Pooling}: Mean\\
    \emph{Batch Size}: 128& \emph{Optimizer}: Adam & \emph{Weight Decay}: 0.0   \\
     \emph{Base LR}: 0.001 &
     \emph{Max Epochs}: 1000 & \emph{LR Scheduler}: Reduce on Plateau\\
    \emph{LR Reduce Factor}: 0.5  & \emph{Dropout}: 0.2 & \emph{LR Patience (min): 10 (1e-5)}\\
    \bottomrule
\end{tabular}
\end{center}

\paragraph{GT} Our GT experiments on the synthetic dataset (\Cref{tab:synth-table}) use the following hyperparameters for all encodings:
\begin{center}
\begin{tabular}{ccc} \toprule
 \emph{Depth}: 10& \emph{Width}: 64 & \emph{Heads}: 4\\
    \emph{Batch Size}: 128& \emph{Optimizer}: AdamW & \emph{Weight Decay}: 0.001   \\
     \emph{Base LR}: 0.0001 &
     \emph{Max Epochs}: 1200 & \emph{LR Scheduler}: Cosine\\
    \emph{LR Warmup Epochs}: 30 & \emph{Attention Dropout}: 0.5 &\\
    \bottomrule
\end{tabular}
\end{center}

 \end{document}